\documentclass{article}

\usepackage{microtype}
\usepackage{graphicx}
\usepackage{subcaption}
\usepackage{booktabs} % for professional tables

\usepackage{hyperref}

\makeatletter
\usepackage[preprint]{style/icml2026}
\makeatother

\usepackage{amsmath}
\usepackage{amssymb}
\usepackage{mathtools}
\usepackage{amsthm}

\usepackage{bm}
\usepackage{enumitem}
\usepackage[table]{xcolor}
\usepackage{multirow}
\usepackage{multirow}
\usepackage{booktabs,longtable}
\usepackage[most]{tcolorbox}
\tcbuselibrary{skins} % 用于更高级的皮肤

\usepackage[capitalize,noabbrev]{cleveref}

\theoremstyle{plain}
\newtheorem{theorem}{Theorem}[section]

\newtheorem{lemma}[theorem]{Lemma}

\theoremstyle{definition}
\newtheorem{definition}[theorem]{Definition}

\theoremstyle{remark}
\newtheorem{remark}[theorem]{Remark}

\usepackage[textsize=tiny]{todonotes}

\begin{document}

\twocolumn[
  \icmltitle{RACER: Risk-Aware Calibrated Efficient Routing for Large Language Models}

  % It is OKAY to include author information, even for blind submissions: the
  % style file will automatically remove it for you unless you've provided
  % the [accepted] option to the icml2026 package.

  % List of affiliations: The first argument should be a (short) identifier you
  % will use later to specify author affiliations Academic affiliations
  % should list Department, University, City, Region, Country Industry
  % affiliations should list Company, City, Region, Country

  % You can specify symbols, otherwise they are numbered in order. Ideally, you
  % should not use this facility. Affiliations will be numbered in order of
  % appearance and this is the preferred way.
  \icmlsetsymbol{equal}{*}

  % \begin{icmlauthorlist}
  %   \icmlauthor{Firstname1 Lastname1}{equal,yyy}
  %   \icmlauthor{Firstname2 Lastname2}{equal,yyy,comp}
  %   \icmlauthor{Firstname3 Lastname3}{comp}
  %   \icmlauthor{Firstname4 Lastname4}{sch}
  %   \icmlauthor{Firstname5 Lastname5}{yyy}
  %   \icmlauthor{Firstname6 Lastname6}{sch,yyy,comp}
  %   \icmlauthor{Firstname7 Lastname7}{comp}
  %   %\icmlauthor{}{sch}
  %   \icmlauthor{Firstname8 Lastname8}{sch}
  %   \icmlauthor{Firstname8 Lastname8}{yyy,comp}
  %   %\icmlauthor{}{sch}
  %   %\icmlauthor{}{sch}
  % \end{icmlauthorlist}

  % \icmlaffiliation{yyy}{Department of XXX, University of YYY, Location, Country}
  % \icmlaffiliation{comp}{Company Name, Location, Country}
  % \icmlaffiliation{sch}{School of ZZZ, Institute of WWW, Location, Country}

  % \icmlcorrespondingauthor{Firstname1 Lastname1}{first1.last1@xxx.edu}
  % \icmlcorrespondingauthor{Firstname2 Lastname2}{first2.last2@www.uk}

\begin{icmlauthorlist}
  \icmlauthor{Sai Hao}{sustech}
  \icmlauthor{Hao Zeng}{sustech}
  \icmlauthor{Hongxin Wei}{sustech}
  \icmlauthor{Bingyi Jing}{cuhksz,slai}
\end{icmlauthorlist}

\icmlaffiliation{sustech}{Southern University of Science and Technology, China}
\icmlaffiliation{cuhksz}{The Chinese University of Hong Kong, Shenzhen, China}
\icmlaffiliation{slai}{Shenzhen Loop Area Institute, China}

% co-Corresponding authors (list both)
\icmlcorrespondingauthor{Bingyi Jing}{bingyijing@cuhk.edu.cn}
\icmlcorrespondingauthor{Hongxin Wei}{weihx@sustech.edu.cn}

  % You may provide any keywords that you find helpful for describing your
  % paper; these are used to populate the "keywords" metadata in the PDF but
  % will not be shown in the document
  % \icmlkeywords{Machine Learning, ICML}

  \vskip 0.3in
]

% this must go after the closing bracket ] following \twocolumn[ ...

% This command actually creates the footnote in the first column listing the
% affiliations and the copyright notice. The command takes one argument, which
% is text to display at the start of the footnote. The \icmlEqualContribution
% command is standard text for equal contribution. Remove it (just {}) if you
% do not need this facility.

% Use ONE of the following lines. DO NOT remove the command.
% If you have no special notice, KEEP empty braces:
\printAffiliationsAndNotice{}  % no special notice (required even if empty)
% Or, if applicable, use the standard equal contribution text:
% \printAffiliationsAndNotice{\icmlEqualContribution}

\begin{abstract}
Efficiently routing queries to the optimal large language model (LLM) is crucial for optimizing the cost-performance trade-off in multi-model systems.  
However, most existing routers rely on single-model selection, making them susceptible to misrouting.
In this work, we formulate LLM routing as the $\alpha$-VOR problem to minimize expected set size while controlling the misrouting risk, 
and propose a novel method -- RACER, extending base routers to output model sets that can be subsequently aggregated for improved output.
In particular, RACER constructs nested model sets via augmented scoring and utilizes finite-sample concentration bounds to calibrate a threshold that allows for both variable set sizes and abstention. 
We theoretically prove that RACER achieves rigorous distribution-free risk control on unseen test data in a post-hoc and model-agnostic manner.
Extensive experiments verify our theoretical guarantees and demonstrate that RACER consistently enhances downstream accuracy across a wide range of benchmarks. 
\end{abstract}

\section{Introduction}

% 多llm系统的介绍和naive的模型选择方法介绍-cascade
Large language models (LLMs) are increasingly deployed not as stand-alone systems, but as components of larger \emph{multi-model systems}, where multiple LLMs with different capabilities and costs coexist. 
Due to variations in data and architectures, these LLMs often exhibit complementary strengths and weaknesses across different domains\cite{chen2024routerdc}. In such settings, a naive strategy is to invoke all candidate models for every query and aggregate their responses via scoring or voting \cite{li2024more,wang2022self,jiang2023llm}. 
While this paradigm can achieve strong performance, its computational cost is often too high in practice. 
Thus, it is essential to determine the invoked LLM(s) for each incoming query, which highlights the cost-performance trade-off in multi-model systems.

% 引入router研究，说明现有router方法效果存在局限，不能有效地挑出最好的llm
Recent research proposed \emph{LLM routers} to predict the most suitable candidate for each query by training a lightweight model, without calling all LLMs~\cite{huang2025routereval,chen2024routerdc,lu2024routing}. 
Yet, on real-world benchmarks, even state-of-the-art routers can misrank candidates and select the wrong LLM, leading to a significant performance drop compared to the ideal selection\cite{huang2025routereval}. 
To mitigate this mismatch, a natural strategy is to expand the selection to a subset of top-ranked candidates.
However, existing subset routing methods often rely on heuristic size controls~\cite{jiang2023llm}, 
which lack coverage guarantees and potentially introduce noisy output from incorrect models that degrades the final decision~\cite{vishwakarma2025prune}.
This limitation raises a pivotal issue: 
\vspace{-1.5em}
\begin{quote}
\textit{How can we constrain the selection set size while guaranteeing that it contains a correct model?}
\end{quote}
\vspace{-0.3em}

In this work, we propose \textit{\textbf{R}isk-\textbf{A}ware \textbf{C}alibrated \textbf{E}fficient \textbf{R}outing} (RACER), a novel post-hoc paradigm with guaranteed risk control. 
We define the routing \textit{risk} as the probability of excluding all optimal LLMs, and formulate LLM routing as the \textit{$\alpha$-Valid Optimal Routing} ($\alpha$-VOR) problem (See Definition~\ref{def:problem}): minimize the expected model set size while bounding the \textit{risk} below a user-specified level $\alpha$.
RACER employs an augmented scoring mechanism to construct a nested family of model sets, and leverages finite-sample concentration bounds to calibrate the conservativeness threshold, thereby routing queries to a model set satisfying the $\alpha$-VOR constraint. 
This paradigm also supports abstention when no candidate LLM is suitable.
With effective aggregation strategies, we harness the different strengths of the selected LLMs to generate an output superior to single-model selection across diverse domains.
Overall, RACER is lightweight, flexible, and model-agnostic, as it enhances arbitrary black-box routers without retraining.

Theoretically, we provide distribution-free guarantees, ensuring that the risk is controlled below $\alpha$ assuming exchangeability (Theorem~\ref{thm:risk_control}). 
We also establish a risk lower bound (Theorem~\ref{thm:lower_bound}), showing that RACER balances safety and efficiency without being overly conservative. 
These results rely on the nestedness of the prediction sets (Lemma~\ref{lem:nested}) and the monotonicity of the risk (Lemma~\ref{lem:loss}).
 
To verify the validity and effectiveness of RACER, 
we conduct extensive studies on four diverse benchmarks (GSM8K \cite{cobbe2021training}, MMLU \cite{hendrycks2021measuring}, CMMLU \cite{li2024cmmlu}, and ARC-Challenge \cite{clark2018think}) using three distinct base routers and two nonconformity scores over a candidate pool of seven LLMs. 
The results show that RACER achieves rigorous risk control, and consistently improves downstream accuracy with aggregation strategies over single-model selection. 
Specifically, compared with the base routers, our method achieves up to a $4.0\%$ absolute accuracy improvement on individual benchmarks and an average improvement of $3.6\%$ across all tasks. 
Moreover, RACER surpasses the single best-performing candidate LLM by $5.0\%$ on average across all tasks.
Furthermore, extended experiments demonstrate that compared to full-model aggregation, RACER achieves higher accuracy while reducing model calls by up to $58.6\%$.
Our code is available at \url{https://anonymous.4open.science/r/RACER-27A9/}.

% 0121修改
Our contributions are summarized as follows:
\begin{itemize}[itemsep=3pt]
    % \item We formally define the \textbf{$\alpha$-Valid Optimal Routing} ($\alpha$-VOR) problem.
    % We depart from traditional top-1 routing by allowing the router to output a \emph{set} of candidate models, enabling explicit control over the risk of excluding optimal models. 
    % The resulting $\alpha$-VOR formulation captures a principled trade-off between \textit{Risk} and \textit{Size}.
    \item We formulate LLM routing as the \textbf{$\alpha$-VOR problem}, establishing a principled framework to optimize the \textit{cost-performance trade-off} by minimizing the expected model set size while controlling the misrouting risk.
    % \item We propose \textbf{RACER}, a model-agnostic post-hoc paradigm to solve the $\alpha$-VOR problem. RACER enhances arbitrary scoring-based routers by establishing rigorous theoretical guarantees on risk control, all without modifying the base router's architecture or requiring retraining. Furthermore, we empirically demonstrate its robustness by verifying that RACER effectively controls the desired risk level across multiple datasets using various nonconformity scores.
    \item We propose \textbf{RACER}, a novel post-hoc paradigm that transforms single-model selection into calibrated set prediction. It is compatible with any base router, supports empty sets, and consistently improves downstream accuracy through aggregation.
    % \item We demonstrate consistent downstream performance gains across diverse benchmarks.
    % Extensive experiments confirm that RACER universally enhances the downstream accuracy of every corresponding base router by aggregating the outputs of the calibrated model sets.
    \item We establish rigorous \textbf{distribution-free theoretical guarantees}: we prove that RACER controls the misrouting risk on unseen queries at the user-specified level $\alpha$, and provide a matching risk lower bound, showing that the method achieves statistical efficiency without being overly conservative.
\end{itemize}

\section{Background}
\label{sec:background}

In this section, we introduce the multi-model routing problem, discuss the risks associated with standard routers, and formally define the \textit{$\alpha$-Valid Optimal Routing} problem.

\subsection{Preliminaries}
% \paragraph{Notation.}
Let $\mathcal{X}$ be the space of input queries and $\mathcal{Y}$ be the answer space. Consider a pool of $K$ candidate LLMs, denoted by $\mathcal{M} = \{M_1, \dots, M_K\}$. 
We use lowercase $\bm{x}$ for a fixed query and uppercase $\bm{X}$ for a random query.
For a given input query $\bm{x} \in \mathcal{X}$, each LLM $m \in \mathcal{M}$ generates a response $y_m = m(\bm{x}) \in \mathcal{Y}$. We assume the existence of a ground-truth evaluation oracle that determines the correctness of each response. Let $G(\bm{x}) \subseteq \mathcal{M}$ denote the set of \textit{ground-truth LLMs} that generate a correct response for $\bm{x}$. Note that $G(\bm{x})$ may be empty if no candidate LLM in the pool can answer the query correctly.

\paragraph{Multi-model routing.}
A standard router typically relies on a scoring function $f: \mathcal{X} \times \mathcal{M} \to \mathbb{R}$, where $f(\bm{x}, m)$ represents the predicted performance of LLM $m$ for query $\bm{x}$. 
Existing routing methods typically select the single model with the highest score:
\begin{equation*}
    \hat{m}(\bm{x}) = \arg\max_{m \in \mathcal{M}} f(\bm{x}, m).
\end{equation*}
While efficient, the mismatch between predicted rankings and ground truth makes single-model selection susceptible to \textit{misrouting} (i.e., $\hat{m}(\bm{x}) \notin G(\bm{x})$), as even state-of-the-art routers can select a sub-optimal LLM \cite{huang2025routereval}. 
Therefore, expanding the selection to a candidate subset is a natural strategy. This approach improves the likelihood of a correct selection and enables the application of aggregation techniques to yield superior performance~\cite{wang2023selfconsistency,taubenfeld2025confidence}.
However, existing methods often rely on heuristic size controls\cite{jiang2023llm}, lack statistical coverage guarantees and often introduce noise from incorrect models.
To address this challenge, we formalize the LLM routing problem to achieve rigorous risk control while minimizing the expected model set size.

% To address this instability, a natural approach is to prune incorrect models. This is analogous to option pruning in multiple-choice tasks~\cite{vishwakarma2025prune}, where eliminating options narrows the candidate set and improves the likelihood of a correct selection.
% % Furthermore, applying aggregation techniques to the retrieved subset can yield performance superior to random selection~\cite{wang2023selfconsistency,taubenfeld2025confidence}.

\subsection{Problem formulation}
% To address this challenge, we shift the objective from selecting a single best LLM to constructing a router-based function $C: \mathcal{X} \to 2^{\mathcal{M}}$, which maps each query $\bm{x}$ to a model set $C(\bm{x}) \subseteq \mathcal{M}$.
Given a router, we first construct a function $C: \mathcal{X} \to 2^{\mathcal{M}}$, which maps each query $\bm{x}$ to a model set based on router scores.
To ensure the selected subset forms a valid basis for downstream aggregation, we characterize the performance of $C(\cdot)$ along two dimensions: \emph{risk} and \emph{size}.
Specifically, we define the misrouting \textit{loss} as the event where the model set fails to cover any ground-truth LLM:
\begin{equation*}
    \ell(C(\bm{x}), G(\bm{x})) := \mathbf{1}(C(\bm{x}) \cap G(\bm{x}) = \emptyset).
\end{equation*}
The misrouting \emph{risk} is then defined as $R(C) := \mathbb{E}[ \ell(C(\bm{X}), G(\bm{X}))]$, and the expected set \emph{size} is measured by $\mathbb{E}[|C(\bm{X})|]$, the average inference cost (i.e., the number of models invoked per query).

We formulate our goal of minimizing \textit{size} while controlling \textit{risk} as the \textit{$\alpha$-Valid Optimal Routing} ($\alpha$-VOR) problem:
\begin{definition}[$\alpha$-VOR]
	\label{def:problem}
	Given a risk level $\alpha \in (0, 1)$, the goal of $\alpha$-VOR is to find an optimal function $C^*$ that minimizes the expected size of prediction model sets while satisfying the validity constraint:
	\begin{equation} \label{eq:problem}
		C^* = \arg\min_{C} \mathbb{E}[|C(\bm{X})|], \quad \text{subject to} \quad R(C) \le \alpha.
	\end{equation}
\end{definition}
% This formulation provides a principled framework for optimizing the cost-performance trade-off: instead of a single or fixed top-$k$ selection, $C(\cdot)$ adaptively outputs a model set, ensuring the risk is controlled at level $\alpha$.
By strictly bounding the probability of missing the ground truth, $C(\cdot)$ ensures that downstream aggregation mechanisms operate on a set that reliably includes correct answers, thereby translating risk control into superior performance.

In practice, as the data distribution is unknown, solving the optimization in Eq.~\eqref{eq:problem} exactly is infeasible. This requires a data-driven approximation using a finite calibration dataset. 
% We address this challenge in the next section.

\begin{figure*}
    \centering
    \includegraphics[width=0.88\linewidth]{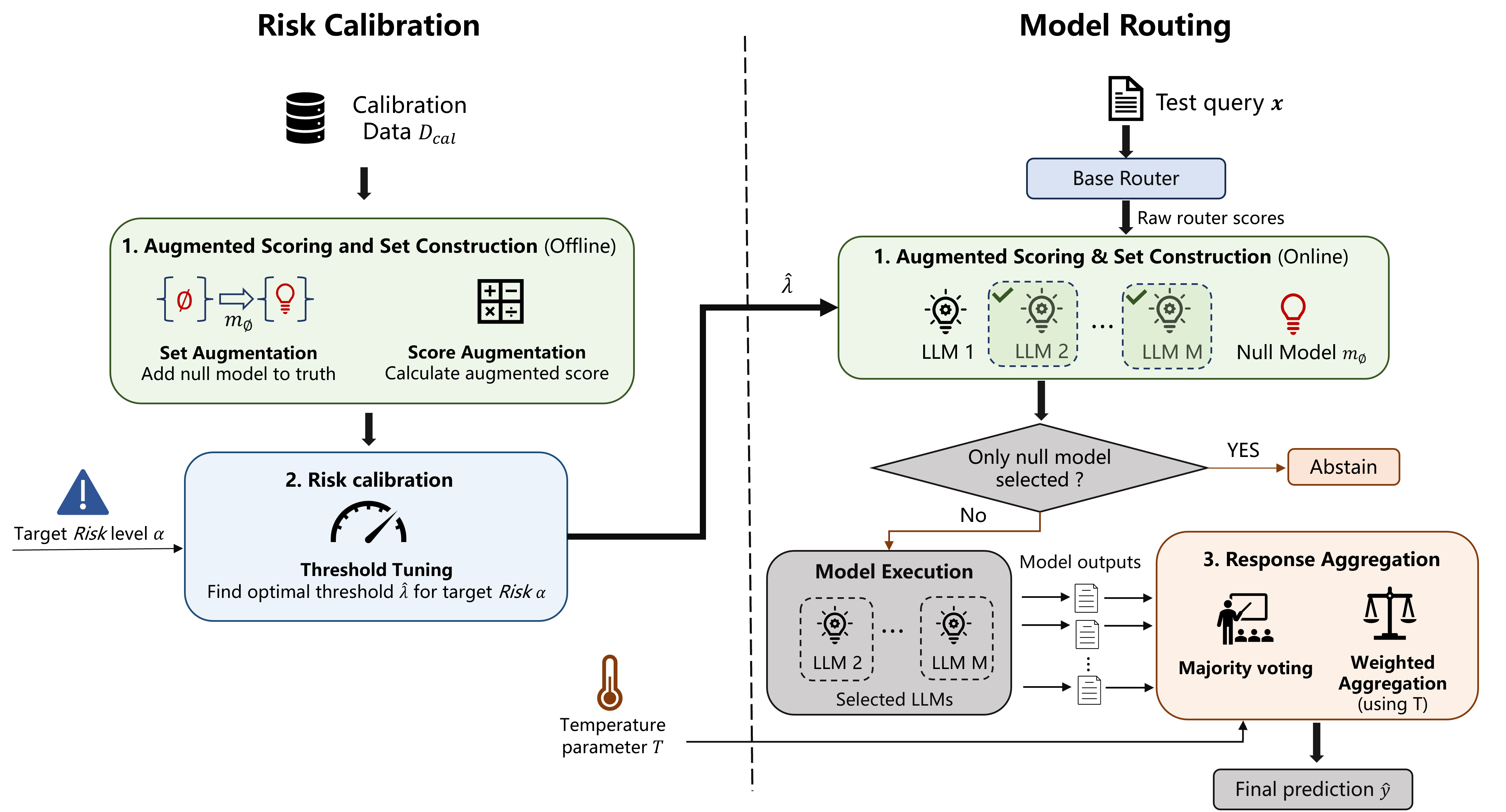}
    \caption{Overview of the RACER paradigm. RACER operates in two phases. \textbf{Risk Calibration (Left):} The calibration module uses a labeled dataset $\mathcal{D}_{\mathrm{cal}}$ and a user-specified risk level $\alpha$. It augments the standard model space $\mathcal{M}$ with a null model $m_\emptyset$ to construct augmented ground truth set $G'$. The threshold $\hat{\lambda}$ is then computed to guarantee risk control. \textbf{Model Routing (Right):} During inference, the paradigm applies the calibrated $\hat{\lambda}$ to the augmented scores of a test query $\bm{x}$. This generates a prediction set $C_{\hat{\lambda}}(\bm{x})$. If the set contains only the null model, the system triggers abstention; otherwise, it proceeds to \textbf{Response Aggregation}, where the outputs of the selected standard LLMs are combined via majority voting or weighted aggregation to produce the final prediction $\hat{y}$.}
    \label{fig:paradigm}
\end{figure*}

\begin{remark}[Interpretation of validity]
    % A distinctive feature of this setting is the potential for queries where no candidate model is correct, i.e., $G(\bm{x})=\emptyset$.
    We consider a routing decision to be valid if it covers a ground truth LLM when one exists, or correctly abstains (i.e., returns the empty set) when no suitable LLM is available. To align this with our loss formulation, we will implicitly assume the existence of a ``null'' model in our proposed paradigm.
\end{remark}

\section{RACER}
\label{sec:method}
In this section, we present \textbf{Risk-Aware Calibrated Efficient Routing} (RACER), a post-hoc and model-agnostic paradigm to solve the $\alpha$-VOR problem (Eq.~\eqref{eq:problem}).
RACER adaptively expands or contracts the model set based on router uncertainty. 
By employing a calibration procedure to determine a data-dependent threshold, it transforms raw router scores into calibrated set predictions with guaranteed risk control, all without retraining.

As illustrated in Figure~\ref{fig:paradigm}, RACER is composed of three key modules: (1) Augmented Scoring and Set Construction, which extends the scoring mechanism to handle abstention and parameterizes the routing decisions; (2) Risk Calibration, which optimizes a data-dependent threshold on a calibration dataset to control risk, and (3) Inference and Response Aggregation, which applies the calibrated threshold to new queries and aggregates the final output.

\subsection{Augmented scoring and set construction}

For any query $\bm{x} \in \mathcal{X}$, standard routers are ill-equipped to handle cases where $G(\bm{x}) = \emptyset$. To address this, we formalize the ``abstention" mechanism by introducing a virtual null model, denoted by $m_\emptyset$.
We define the \emph{augmented model pool} as $\mathcal{M}' = \mathcal{M} \cup \{m_\emptyset\}$. Accordingly, we map the original ground truth set $G(\bm{x})$ to an \emph{augmented ground truth set} $G'(\bm{x}) \subseteq \mathcal{M}'$ defined formally as follows:
\begin{equation} \label{eq:aug_ground_truth}
    G'(\bm{x})=
    \begin{cases}
        G(\bm{x}), & \text{if } G(\bm{x})\neq\emptyset,\\
        \{m_\emptyset\}, & \text{if } G(\bm{x})=\emptyset.
    \end{cases}
\end{equation}
This transformation ensures that $G'(\bm{x})$ is guaranteed to be never empty, explicitly treating the selection of $m_\emptyset$ as the correct decision when all candidate LLMs fail.

To perform routing over the augmented set $\mathcal{M}'$, we extend the base scoring function $f: \mathcal{X} \times \mathcal{M} \to \mathbb{R}$ to an \emph{augmented router score} $r: \mathcal{X} \times \mathcal{M}' \to \mathbb{R}$. We define a function $\phi: \mathbb{R}^{|\mathcal{M}|} \to \mathbb{R}$ to synthesize a score for the null model based on the confidence of standard models:
\begin{equation} \label{eq:aug_score}
	r(\bm{x}, m) =
	\begin{cases}
		f(\bm{x}, m) & \text{if } m \in \mathcal{M}, \\
		\phi\left( \{f(\bm{x}, k) : k \in \mathcal{M}\} \right) & \text{if } m = m_\emptyset.
	\end{cases}
\end{equation}
This formulation preserves the exchangeability of the resulting scores, as $m_\emptyset$'s score depends solely on the current query. 
The augmented router score $r(\bm{x}, m)$ is subsequently transformed into a non-conformity score $s(\bm{x}, m)$ via a monotonic mapping. The specific choice of $\phi$ and the non-conformity construction $s(\bm{x},m)$ are detailed in Section~\ref{subsec:setup}.

Based on non-conformity score $s(\bm{x}, m)$, we construct a parameterized family of functions $\{C_\lambda\}_\lambda$. For a given threshold $\lambda$, the prediction model set is defined as:
\begin{equation}\label{eq:prediction_set_def} 
    C_\lambda(\bm{x}) = \left\{ m \in \mathcal{M}' : s(\bm{x}, m) \le \lambda \right\}. 
\end{equation}
By varying $\lambda$, we obtain a nested sequence of sets, allowing us to tune the trade-off between the \textit{size} and \textit{risk}.

\subsection{Risk calibration}
The core objective of RACER is to determine a data-dependent threshold $\hat{\lambda}$ satisfying the $\alpha$-VOR constraint. We assume access to a finite labeled calibration dataset $\mathcal{D}_{\mathrm{cal}} = \{ (\bm{x}_i, G(\bm{x}_i)) \}_{i=1}^n$, where $\bm{x}_i$ is drawn exchangeably from the same distribution as the test data, $G(\bm{x}_i)\subseteq \mathcal{M}$ denotes the set of ground truth LLMs for $\bm{x}_i$. For clarity and notational convenience, throughout the remainder of this paper, we use $G_i$ to denote $G(\bm{x}_i)$.

\paragraph{Loss function.}
For each calibration sample, we compute the augmented ground truth set $G'_i$ via Eq.~\eqref{eq:aug_ground_truth} and the non-conformity scores $\{s(\bm{x}_i, m)\}_{m \in \mathcal{M}'}$. The misrouting loss for a given threshold $\lambda$ is defined as:
\begin{equation} \label{eq:loss}
    l(C_{\lambda}(\bm{x}_i), G'_i) = \mathbf{1}\bigl\{ C_\lambda(\bm{x}_i) \cap G'_i = \emptyset \bigr\}. 
\end{equation}
We denote this as $L_i(\lambda) = l(C_{\lambda}(\bm{x}_i), G'_i)$. By definition of $C_\lambda(\bm{x}_i)$ in Eq.~\eqref{eq:prediction_set_def}, the prediction model set fails to cover $G'_i$ if and only if $\lambda$ is strictly smaller than the minimum score among the ground truth LLMs. 
Let $s_i = \min_{m \in G'_i} s(\bm{x}_i, m)$ represent the critical non-conformity score for $\bm{x}_i$. Consequently, the loss can be equivalently expressed as 
$$L_i(\lambda) = \mathbf{1}\{ s_i > \lambda \}.$$

\paragraph{Threshold selection.}
To guarantee risk control on unseen data, we leverage finite-sample concentration bounds. We define the empirical risk as $\bar{L}_n(\lambda) = \frac{1}{n} \sum_{i=1}^n L_i(\lambda)$. The optimal calibrated threshold $\hat{\lambda}$ is determined as:
\begin{equation}
	\hat{\lambda} = \inf\left\{ \lambda \in \Lambda : \frac{n}{n+1} \bar{L}_n(\lambda) + \frac{1}{n+1} \le \alpha \right\}.
	\label{eq:lambda_hat}
\end{equation}
Through this calibration procedure, RACER guarantees that on unseen data from the same distribution, 
$$\mathbb{E}[l(C_{\hat{\lambda}}(\bm{X}), G'(\bm{X}))] \le \alpha.$$

\subsection{Inference and response aggregation}
\label{subsec:aggregation}

Given a query \(\bm{x}_{n+1}\), we construct the prediction set \(C_{\hat\lambda}(\bm{x}_{n+1})\) based on Eq.~\eqref{eq:prediction_set_def} and (\ref{eq:lambda_hat}). 
If $ C_{\hat{\lambda}}(\bm{x}_{n+1}) \cap \mathcal{M} = \emptyset $, the system triggers abstention. 
Otherwise, we collect outputs $\{y_m\}$ from models $m \in C_{\hat\lambda}(\bm{x}_{n+1}) \cap \mathcal{M}$ and apply aggregation strategies to derive the final answer. We employ two aggregation methods: majority voting \cite{wang2022self} and weighted aggregation \cite{taubenfeld2025confidence}. 

\paragraph{Majority voting.}
We perform majority voting over the valid models $ C_{\hat{\lambda}}(\bm{x}_{n+1}) \cap \mathcal{M}$, using average router scores for tie-breaking. Let $M_y = \{ m \in C_{\hat{\lambda}}(\bm{x}_{n+1}) \cap \mathcal{M} : y_m = y \}$, and $ Y^* $ denote the set of answers with the maximum vote count $ |M_y| $. The final prediction $ \hat{y} $ is determined by:
\begin{equation*}
\label{eq:agg_vote_tiebreak} 
\hat{y} = \underset{y \in Y^*}{\arg\max} \frac{1}{|M_y|} \sum_{m \in M_y} r(\bm{x}_{n+1}, m). 
\end{equation*}
Note that if $ |Y^*| = 1 $, this reduces to standard majority voting; otherwise, it selects the candidate with the highest average router confidence among the tied answers.

\paragraph{Weighted aggregation.}
% incorporates model-specific confidence measures to 
In contrast, the weighted aggregation method assigns model-specific weights, normalized via a tunable temperature parameter $T$:
\begin{equation*}
	\tilde{w}_m = \frac{\exp(w_m / T)}{\sum_{m' \in C_{\hat{\lambda}}(\bm{x}_{n+1})} \exp(w_{m'} / T)}.
\end{equation*}
The final prediction maximizes the weighted sum of votes:
\begin{equation*}
\label{eq:agg_weighted}
	\hat{y} = \arg\max_{y} \sum_{m \in C_{\hat{\lambda}}(\bm{x}_{n+1})} \tilde{w}_m \mathbf{1}(a_m = y),
\end{equation*}
We evaluate three distinct weighting schemes for $w_m$: \textit{Base router scores}, \textit{Verbal binary confidence} \cite{lin2022teaching}, and \textit{$\bm P(\text{True})$ confidence} \cite{kadavath2022language}. 
The temperature $T$ is tuned independently for each model and weighting scheme using a $10\%$ validation set.
Detailed definitions of these metrics, as well as the comprehensive protocol for parameter selection are provided in Appendix~\ref{app:implementation_details}.

Notably, RACER offers several compelling advantages:
\begin{itemize}[itemsep=3pt]
    \item \textbf{Easy-to-use}: RACER is a post-hoc paradigm that requires no retraining of the base router or LLMs, allowing for lightweight and flexible deployment.
    \item \textbf{Model-agnostic}: It is compatible with arbitrary base routers and non-conformity score functions, universally enhancing them without architectural constraints.
    \item \textbf{Reliable}: RACER achieves rigorous distribution-free risk control at a user-specified level $\alpha$.
\end{itemize}

We summarize the complete RACER paradigm, including aggregation, in Algorithm \ref{alg:racer}. In the next section, we provide a detailed theoretical analysis demonstrating that RACER satisfies the validity constraint of the $\alpha$-VOR problem.

\begin{algorithm}[t]
	\caption{RACER}
	\label{alg:racer}
	
	\textbf{Input:} 
    Calibration data $\mathcal{D}_{\mathrm{cal}}=\{(\bm{x}_i, G_i)\}_{i=1}^n$, 
    validation data $\mathcal{D}_{\mathrm{val}}$,
    test query $\bm{x}$,
    base scoring function $f: \mathcal{X} \times \mathcal{M} \to \mathbb{R}$,
    risk level $\alpha$,
    LLM pool $\mathcal{M}$
	
	\textbf{Output:} Prediction set $C(\bm{x})$, Final answer $\hat{y}$
	
	\begin{algorithmic}[1]
		\STATE Initialize augmented model pool $\mathcal{M}' \leftarrow \mathcal{M} \cup \{m_\emptyset\}$.
		\STATE Initialize score set $S \leftarrow \emptyset$.
		\FOR{each $(\bm{x}_i, G_i) \in \mathcal{D}_{\mathrm{cal}}$}
            \STATE Compute $\{s(\bm{x}_i, m)\}_{m \in \mathcal{M}'}$ by $\{f(\bm{x}_i, m)\}_{m \in \mathcal{M}}$.
            \STATE $G'_i \leftarrow G_i$ \textbf{if} $G_i \neq \emptyset$ \textbf{else} $\{m_\emptyset\}$.
            % \STATE Compute non-conformity scores $\{s(\bm{x}_i, m)\}_{m \in \mathcal{M}'}$.
            \STATE $s_i \leftarrow \min_{m \in G'_i} s(\bm{x}_i, m)$.
            \STATE $S \leftarrow S \cup \{s_i\}$.
		\ENDFOR
        \STATE $\hat{\lambda} \leftarrow \inf\left\{ \lambda \in \Lambda : \frac{1 + \sum_{s_i \in S} \mathbf{1}\{ s_i > \lambda \}}{n+1} \le \alpha \right\}$
        \STATE Compute aggregation parameters $\bm{\theta}$ using $\mathcal{D}_{\mathrm{val}}$.
        \STATE Compute $\{s(\bm{x}, m)\}_{m \in \mathcal{M}'}$ given $\{f(\bm{x}, m)\}_{m \in \mathcal{M}}$.
		\STATE $C_{\hat{\lambda}}(\bm{x}) \leftarrow \{ m \in \mathcal{M}' : s(\bm{x}, m) \le \hat{\lambda} \}$.
		
		\IF{$C_{\hat{\lambda}}(\bm{x}) \cap \mathcal{M} = \emptyset$}
		    \STATE $\hat{y} \leftarrow \texttt{Abstain}$
		\ELSE
            \STATE $\mathcal{M}_{\text{sel}} \leftarrow C_{\hat{\lambda}}(\bm{x}) \cap \mathcal{M}$.
		    \STATE $\mathcal{Y}_{\text{out}} \leftarrow \{ m(\bm{x}) : m \in \mathcal{M}_{\text{sel}} \}$.
		    \STATE $\hat{y} \leftarrow \text{Aggregate}(\mathcal{Y}_{\text{out}}; \bm{\theta})$.
		\ENDIF
		
		\STATE \textbf{Return} $C_{\hat{\lambda}}(\bm{x})$, $\hat{y}$
	\end{algorithmic}
\end{algorithm}

\section{Theoretical analysis}
In this section, we provide a rigorous theoretical analysis of the RACER paradigm.
We first establish fundamental structural properties, demonstrating that the constructed prediction sets form a nested family (Lemma~\ref{lem:nested}) and that the loss function is monotone, right-continuous, and bounded with respect to the threshold (Lemma~\ref{lem:loss}).
Finally, we present our main theoretical results: we prove that the calibrated threshold guarantees risk control on unseen queries at the user-specified level (Theorem~\ref{thm:risk_control}), and establish a matching risk lower bound (Theorem~\ref{thm:lower_bound}).

\subsection{Nestedness and monotonicity}
% \paragraph{Notation and definitions.}
Let $(\bm{X}, G'(\bm{X}))$ denote a random query and its augmented ground-truth set.
Recall that for a given threshold $\lambda\in\mathbb{R}$, the router-based function outputs $C_\lambda(\bm{X})\subseteq \mathcal{M}'$ and incurs the misrouting loss $l(C_\lambda(\bm{X}), G'(\bm{X}))$ as defined in Eq.~\eqref{eq:loss}.
The Risk of threshold $\lambda$ is given by:
\[
R(\lambda) := \mathbb{E}\!\left[l(C_\lambda(\bm{X}), G'(\bm{X}))\right].
\]
% which is the probability that the router excludes all optimal models in the augmented space.
Correspondingly, we define the \emph{coverage} as
$
\mathrm{Cov}(\lambda) := \mathbb{P}\!\left(C_\lambda(\bm{X}) \cap G'(\bm{X}) \neq \emptyset\right).
$
To ensure that the calibration procedure is well-posed, we analyze the structural properties of both the prediction model sets and the associated misrouting loss function in this section.

% 嵌套性
\begin{lemma}[Nestedness]
\label{lem:nested}
For any query $\bm{x}\in\mathcal{X}$, the prediction model sets
$\{C_\lambda(\bm{x})\}_{\lambda\in\mathbb{R}}$ defined in
Eq.~\eqref{eq:prediction_set_def} form a nested family. That is, for any
$\lambda_1\le \lambda_2$,
\[
C_{\lambda_1}(\bm{x}) \subseteq C_{\lambda_2}(\bm{x}).
\]
\end{lemma}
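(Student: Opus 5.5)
The argument is a direct element-chasing proof from the definition in Eq.~\eqref{eq:prediction_set_def}. I fix an arbitrary query $\bm{x}\in\mathcal{X}$ and thresholds $\lambda_1\le\lambda_2$. The key observation is that the non-conformity scores $s(\bm{x},m)$, for $m\in\mathcal{M}'$, depend only on $\bm{x}$ and $m$ and not on $\lambda$. This holds for the standard models, whose scores come from $f$. It also holds for the null model $m_\emptyset$, whose score comes from $\phi$ applied to $\{f(\bm{x},k)\}_{k\in\mathcal{M}}$ through Eq.~\eqref{eq:aug_score} and then a fixed monotonic mapping. So $\lambda$ enters the construction only through the comparison $s(\bm{x},m)\le\lambda$.

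I would take an arbitrary $m\in C_{\lambda_1}(\bm{x})$. By definition, $m\in\mathcal{M}'$ and $s(\bm{x},m)\le\lambda_1$. Transitivity of $\le$ on $\mathbb{R}$ then gives $s(\bm{x},m)\le\lambda_1\le\lambda_2$, hence $m\in C_{\lambda_2}(\bm{x})$. Since $m$ was arbitrary, $C_{\lambda_1}(\bm{x})\subseteq C_{\lambda_2}(\bm{x})$.

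The same reasoning covers the degenerate cases automatically. If $C_{\lambda_1}(\bm{x})=\emptyset$, the inclusion is trivial. The null model $m_\emptyset$ needs no separate treatment, because it is just another element of $\mathcal{M}'$ with a real-valued, $\lambda$-independent score. I would end with a remark that the sets are sublevel sets of the fixed map $m\mapsto s(\bm{x},m)$ over the finite pool $\mathcal{M}'$; this immediately gives the monotonicity of $L_i(\lambda)=\mathbf{1}\{s_i>\lambda\}$ used later in Lemma~\ref{lem:loss}.

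There is no real obstacle here. The only point to state explicitly is that the score function is fixed before calibration and does not depend on $\lambda$ or on the calibration data; otherwise nestedness could fail.
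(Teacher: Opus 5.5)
Your proof is correct and is essentially identical to the paper's: fix $m\in C_{\lambda_1}(\bm{x})$, chain $s(\bm{x},m)\le\lambda_1\le\lambda_2$, and conclude $m\in C_{\lambda_2}(\bm{x})$. Your remarks about the scores, including that of $m_\emptyset$, not depending on $\lambda$ are accurate. One caveat: a score that depended on the calibration data would not by itself break nestedness for a fixed $\bm{x}$; that kind of dependence matters for the exchangeability argument in Theorem~\ref{thm:risk_control}, not here.
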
 
The complete proof is given in Appendix~\ref{app:proof_lem_nested}.

% 连续和有界性
\begin{lemma}[Monotonicity, right-continuity, and boundedness]
    For any $(\bm{x},G'(\bm{x}))$, the loss $l(C_{\lambda}(\bm{x}),G'(\bm{x}))$ is a non-increasing and right-continuous function of $\lambda$. Moreover, for random $(\bm{X},G'(\bm{X}))$, we have $0\le l(C_{\lambda}(\bm{X}),G'(\bm{X}))\le 1$ almost surely.
\label{lem:loss}
\end{lemma}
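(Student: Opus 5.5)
The plan is to reduce everything to the scalar representation of the loss already derived in the risk-calibration subsection. Fix $(\bm{x},G'(\bm{x}))$ and set $s^\star(\bm{x}) := \min_{m\in G'(\bm{x})} s(\bm{x},m)$. This minimum is well defined and finite because $G'(\bm{x})$ is nonempty by construction (Eq.~\eqref{eq:aug_ground_truth}) and $\mathcal{M}'$ is finite. I will first re-verify the identity
\[
l(C_\lambda(\bm{x}),G'(\bm{x})) = \mathbf{1}\{s^\star(\bm{x}) > \lambda\}.
\]
By Eq.~\eqref{eq:prediction_set_def}, $C_\lambda(\bm{x})\cap G'(\bm{x})\neq\emptyset$ holds iff some $m\in G'(\bm{x})$ has $s(\bm{x},m)\le\lambda$. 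This is equivalent to $\min_{m\in G'(\bm{x})}s(\bm{x},m)\le\lambda$, since the minimum over a finite set is attained.

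\textbf{Monotonicity.} For $\lambda_1\le\lambda_2$, I will invoke Lemma~\ref{lem:nested}, which gives $C_{\lambda_1}(\bm{x})\subseteq C_{\lambda_2}(\bm{x})$. Then $C_{\lambda_1}(\bm{x})\cap G'(\bm{x})\subseteq C_{\lambda_2}(\bm{x})\cap G'(\bm{x})$. So if the larger intersection is empty, the smaller one is too, i.e.\ $l(C_{\lambda_2}(\bm{x}),G'(\bm{x}))\le l(C_{\lambda_1}(\bm{x}),G'(\bm{x}))$. Equivalently, one can read this off the indicator $\mathbf{1}\{s^\star>\lambda\}$.

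\textbf{Right-continuity.} This is the only step needing care, and it is where attainment of the minimum (finiteness of $\mathcal{M}'$) matters. The function $\lambda\mapsto\mathbf{1}\{s^\star>\lambda\}$ equals $1$ on $(-\infty,s^\star)$ and $0$ on $[s^\star,\infty)$. At any $\lambda_0\ge s^\star$ it is constant ($=0$) on $[\lambda_0,\infty)$. At any $\lambda_0<s^\star$ it is constant ($=1$) on $[\lambda_0,s^\star)$, which is a right neighborhood. Hence $\lim_{\lambda\downarrow\lambda_0}l = l(\lambda_0)$ everywhere. The non-strict inequality $\le\lambda$ in Eq.~\eqref{eq:prediction_set_def} is exactly what places the jump at $s^\star$ with the value $0$ there, giving right- rather than left-continuity.

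\textbf{Boundedness.} The loss is an indicator, so it takes values in $\{0,1\}\subseteq[0,1]$ pointwise for every realization. Hence $0\le l(C_\lambda(\bm{X}),G'(\bm{X}))\le1$ surely, and a fortiori almost surely. I only need measurability of $s(\bm{X},m)$ (inherited from $f$ and $\phi$), so that $R(\lambda)$ is well defined.
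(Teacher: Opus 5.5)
Your proposal is correct and follows essentially the same route as the paper's proof: monotonicity from Lemma~\ref{lem:nested} via inclusion of the intersections with $G'(\bm{x})$, right-continuity from the step-function form $\mathbf{1}\{\lambda < \min_{m\in G'(\bm{x})} s(\bm{x},m)\}$, and boundedness because the loss is an indicator. Your right-continuity check covers every $\lambda_0$, whereas the paper checks only the jump point, so yours is slightly more complete, but the argument is the same.
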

The complete proof is given in Appendix~\ref{app:proof_lem_loss}.
Lemma~\ref{lem:nested} and Lemma~\ref{lem:loss} imply that the empirical risk is non-increasing in $\lambda$ and bounded, so the calibration problem is well-posed and can be solved via a one-dimensional search. When the score distribution has no ties (e.g., under a continuity assumption), the calibrated threshold is unique.

\subsection{Risk control}
We now present the main theoretical guarantee of the RACER paradigm. We show that the threshold $\hat{\lambda}$, selected via the calibration procedure on a finite calibration set, generalizes to control the expected loss on unseen data.
% alpha valid
\begin{theorem}[Risk control]
\label{thm:risk_control}
	Assume the augmented calibration data and the new query $\bm{X}_{n+1}$ are exchangeable. Let $\hat{\lambda}$ be the threshold returned by RACER (Eq.~\eqref{eq:lambda_hat}) with $\alpha\in(0,1)$. Then the \textit{Risk} of $\hat{\lambda}$ satisfies:
    \begin{equation*}\label{eq:risk_control_main}
        R(\hat{\lambda}) :=\mathbb{E}\bigl[l(C_{\hat\lambda}(\bm{X}_{n+1}),G'(\bm{X}_{n+1}))\bigr] \le \alpha,
    \end{equation*}
    where the expectation is taken over the calibration data as well as the new query $\bm{X}_{n+1}$.
\end{theorem}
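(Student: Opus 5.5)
The plan is to follow the conformal risk control argument: compare $\hat\lambda$ with an \emph{oracle} threshold computed on all $n+1$ exchangeable points, and then use exchangeability to bound the expected test loss at that oracle threshold. Write $L_i(\lambda)=\mathbf{1}\{s_i>\lambda\}$ for $i=1,\dots,n+1$, where $s_{n+1}=\min_{m\in G'(\bm{X}_{n+1})} s(\bm{X}_{n+1},m)$. By Lemma~\ref{lem:loss}, each $L_i$ is non-increasing, right-continuous and bounded by $B=1$. Define the full-data empirical risk $\bar L_{n+1}(\lambda)=\frac{1}{n+1}\sum_{i=1}^{n+1}L_i(\lambda)$ and the oracle threshold
\[
\lambda'=\inf\{\lambda\in\Lambda:\bar L_{n+1}(\lambda)\le\alpha\}.
\]
We assume $\Lambda$ is closed and contains a value $\lambda_{\max}$ with $L_i(\lambda_{\max})=0$ for all $i$, e.g.\ $\lambda_{\max}\ge \max_{m\in\mathcal{M}'}s$. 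This keeps both infima well defined and attained; the attainment uses right-continuity from Lemma~\ref{lem:loss}.

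\textbf{Step 1: $\lambda'\le\hat\lambda$ almost surely.} Since $L_{n+1}(\lambda)\le 1$, for every $\lambda$ we have
\[
\bar L_{n+1}(\lambda)=\tfrac{n}{n+1}\bar L_n(\lambda)+\tfrac{1}{n+1}L_{n+1}(\lambda)\le \tfrac{n}{n+1}\bar L_n(\lambda)+\tfrac{1}{n+1}.
\]
Hence every $\lambda$ feasible for Eq.~\eqref{eq:lambda_hat} is feasible for the oracle problem, which gives $\lambda'\le\hat\lambda$. Lemma~\ref{lem:loss} makes $L_{n+1}$ non-increasing, so $L_{n+1}(\hat\lambda)\le L_{n+1}(\lambda')$. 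By Lemma~\ref{lem:nested} this is the same as saying that $C_{\hat\lambda}(\bm{X}_{n+1})\supseteq C_{\lambda'}(\bm{X}_{n+1})$.

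\textbf{Step 2: exchangeability.} The oracle threshold $\lambda'$ is a symmetric function of the multiset $\{s_1,\dots,s_{n+1}\}$. The augmented score of Eq.~\eqref{eq:aug_score} depends only on the query, so the pairs $(\bm{x}_i,G'_i)$, and therefore the $s_i$, are exchangeable. It follows that $\mathbb{E}[L_{n+1}(\lambda')]=\mathbb{E}[L_i(\lambda')]$ for every $i$. Averaging over $i$,
\[
\mathbb{E}[L_{n+1}(\lambda')]=\mathbb{E}[\bar L_{n+1}(\lambda')]\le\alpha.
\]
The final inequality holds because right-continuity makes the infimum attained, so $\bar L_{n+1}(\lambda')\le\alpha$ pointwise. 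Combining with Step 1 gives $R(\hat\lambda)=\mathbb{E}[L_{n+1}(\hat\lambda)]\le\mathbb{E}[L_{n+1}(\lambda')]\le\alpha$.

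\textbf{Main obstacle.} The delicate step is Step 2. We must ensure that $\lambda'$ is genuinely a symmetric function of the $n+1$ data points, and in particular does not depend on the labels of the calibration points in an ordered way. We also need $\bar L_{n+1}(\lambda')\le\alpha$ to hold exactly rather than only in the limit, which is exactly where right-continuity and a closed $\Lambda$ are used. A second point to check is the edge case where Eq.~\eqref{eq:lambda_hat} is infeasible, for instance when $\alpha<1/(n+1)$. We would handle it with the convention $\hat\lambda=\lambda_{\max}$, where every loss is zero and the bound holds trivially.
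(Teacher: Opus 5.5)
Your proposal is correct and follows the paper's proof essentially step for step. It uses the same oracle threshold on all $n+1$ points, the same comparison $\lambda'\le\hat\lambda$ via $L_{n+1}\le 1$, and the same exchangeability-plus-right-continuity bound: you average over $i$ where the paper conditions on the unordered multiset $E$, which is equivalent. Your explicit treatment of the infeasible case $\alpha<1/(n+1)$ through the convention $\hat\lambda=\lambda_{\max}$ is in fact more careful than the paper, which wrongly asserts that the feasible set in Eq.~\eqref{eq:lambda_hat} is always non-empty.
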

The proof is provided in Appendix~\ref{app:proof_th_risk_control}. Theorem~\ref{thm:risk_control} guarantees that RACER controls the risk at the user-specified level $\alpha$. This property holds for any finite sample size $n$ and is distribution-free (assuming exchangeability). Consequently, the router remains reliable even in safety-critical settings where failure rates must be strictly bounded.

\begin{remark}
    A direct consequence of Theorem~\ref{thm:risk_control} is a guarantee on the coverage of the selected model sets. Since $\mathrm{Cov}(\lambda)=1-R(\lambda)$, Theorem~\ref{thm:risk_control} immediately implies
$\mathrm{Cov}(\hat{\lambda})\ge 1-\alpha$.
\end{remark}

\begin{figure*}[!t]
	\centering
	\begin{subfigure}{0.45\linewidth}
		\centering
		\includegraphics[width=\linewidth]{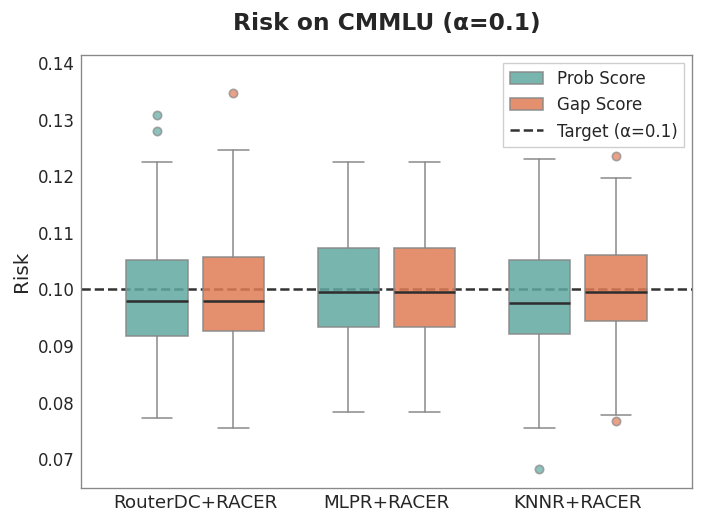}
	\end{subfigure}
	% \vspace{0.5em} % 控制上下两图的间距，可微调
	\begin{subfigure}{0.45\linewidth}
		\centering
		\includegraphics[width=\linewidth]{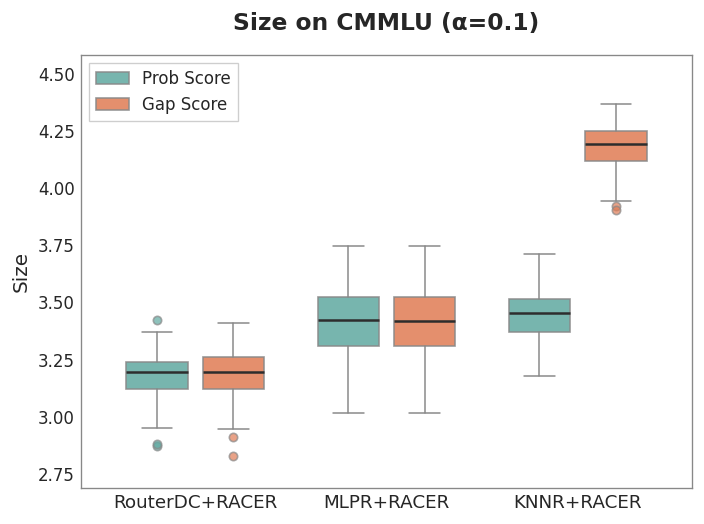}
	\end{subfigure}
	\caption{\textbf{Distributions of \textit{risk} and \textit{size} for RACER on CMMLU over 100 independent trials with a target risk level $\alpha=0.1$.} \textbf{Left}: The distribution of \textit{risk}, where the black dashed line represents the user-specified risk level. Results demonstrate that RACER consistently maintains the risk below the target $\alpha$ for all base routers and non-conformity scores. \textbf{Right}: The distribution of prediction set \textit{size}. The \textcolor[rgb]{0.4,0.7,0.6}{green} and \textcolor[rgb]{0.9,0.6,0.5}{orange} boxes represent the \textit{router score-gap} and \textit{inverse irobability}  non-conformity scores, respectively.}
 \label{fig:example_risk_size}
\end{figure*}

\begin{theorem}[Risk lower bound]
\label{thm:lower_bound}
% 修改表述
In the setting of RACER, assume that the calibration data and the test query are i.i.d., and the non-conformity score $s(\bm{X}, m)$ follows a continuous distribution (i.e., $\mathbb{P}(s(\bm{X}, m) = \lambda) = 0$ for any constant $\lambda$).
Let $\hat{\lambda}$ be the threshold calibrated at level $\alpha \in (0,1)$. Then, the {Risk} on a new query $\bm{X}_{n+1}$ is lower-bounded by:
\begin{equation*}
    % \mathbb{E}\bigl[l(C_{\hat\lambda}(\bm{X}_{n+1}),G'(\bm{X}_{n+1}))\bigr] \ge \alpha - \frac{2}{n+1}.
    R(\hat{\lambda}) \ge \alpha - \frac{2}{n+1}.
\end{equation*}
\end{theorem}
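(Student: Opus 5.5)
This is the standard conformal lower-bound argument, carried out with the critical scores $s_i=\min_{m\in G'_i}s(\bm{x}_i,m)$, $i=1,\dots,n+1$, and using $L_i(\lambda)=\mathbf{1}\{s_i>\lambda\}$.

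\textbf{Step 1: ties.} Since $G'_i$ is a nonempty subset of the finite set $\mathcal{M}'$ and each $s(\bm{X},m)$ has a continuous distribution, the scalar $s_i$ is a minimum of finitely many continuous variables and therefore has no atoms. Under i.i.d.\ sampling, the $s_1,\dots,s_{n+1}$ are then almost surely distinct.

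\textbf{Step 2: identify $\hat\lambda$ as an order statistic.} I take $\Lambda=\mathbb{R}$, or a grid containing all the $s_i$. The condition in Eq.~\eqref{eq:lambda_hat} reads $1+\#\{i\le n: s_i>\lambda\}\le \alpha(n+1)$. Let $k=\lceil (n+1)(1-\alpha)\rceil$ and let $s_{(1)}<\dots<s_{(n)}$ be the ordered calibration scores.
\begin{itemize}
\item The count $\#\{s_i>\lambda\}$ is a right-continuous step function of $\lambda$ (Lemma~\ref{lem:loss}).
\item Hence the infimum is attained, and $\hat\lambda=s_{(k)}$ whenever $k\le n$.
\item If $k>n$, then $\hat\lambda=+\infty$, the risk is $0$, and $\alpha<1/(n+1)\le 2/(n+1)$, so the bound holds trivially.
\end{itemize}
We may therefore assume $k\le n$.

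\textbf{Step 3: exact miscoverage via exchangeability.} The test loss is $\mathbf{1}\{s_{n+1}>s_{(k)}\}$. By i.i.d.\ sampling and Step 1, the rank of $s_{n+1}$ among all $n+1$ scores is uniform on $\{1,\dots,n+1\}$. Moreover, $s_{n+1}>s_{(k)}$ holds exactly when at least $k$ calibration scores lie below $s_{n+1}$, i.e.\ when its overall rank is at least $k+1$. This gives
\[
R(\hat\lambda)=\mathbb{P}(s_{n+1}>s_{(k)})=\frac{n+1-k}{n+1}.
\]

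\textbf{Step 4: conclude.} Since $k<(n+1)(1-\alpha)+1$, we get
\[
R(\hat\lambda)>\alpha-\frac{1}{n+1}\ge\alpha-\frac{2}{n+1}.
\]
The extra $1/(n+1)$ of slack in the stated bound covers the case of a general discrete $\Lambda$, where $\hat\lambda$ may sit one grid point off. It also covers an off-by-one in the $\frac{n}{n+1}\bar L_n+\frac{1}{n+1}$ normalization, if I have mis-indexed it.

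\textbf{Main obstacle.} The delicate part is Step 2: pinning down $\hat\lambda$ exactly, namely the attainment of the infimum, the boundary case $k>n$, and the form of $\Lambda$. The rest is routine.
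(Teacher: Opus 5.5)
Your proof is correct, but it follows a different route from the paper's.

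\textbf{The paper's route.} The paper never identifies $\hat\lambda$ explicitly. It uses the conformal-risk-control template:
\begin{enumerate}
\item It introduces an auxiliary threshold $\hat\lambda''=\inf\{\lambda:\hat R_{n+1}(\lambda)+\frac{1}{n+1}\le\alpha\}$, which is symmetric in all $n+1$ points.
\item It shows $\hat\lambda''\ge\hat\lambda$ by monotonicity, so $\mathbb{E}[L_{n+1}(\hat\lambda)]\ge\mathbb{E}[L_{n+1}(\hat\lambda'')]$.
\item It uses exchangeability to get $\mathbb{E}[L_{n+1}(\hat\lambda'')]=\mathbb{E}[\hat R_{n+1}(\hat\lambda'')]$.
\item It lower-bounds $\hat R_{n+1}(\hat\lambda'')$ in two pieces. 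The infimum property gives a left limit of at least $\alpha-\frac{1}{n+1}$. A separate jump lemma says that, with no ties, each discontinuity has height at most $\frac{1}{n+1}$.
\end{enumerate}
The constant $2$ comes from these two separate losses of $\frac{1}{n+1}$: one from the extra $+\frac{1}{n+1}$ built into $\hat\lambda''$, and one from the jump.

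\textbf{Your route and what each buys.} You use the indicator structure $L_i(\lambda)=\mathbf{1}\{s_i>\lambda\}$ to write $\hat\lambda=s_{(k)}$ with $k=\lceil (n+1)(1-\alpha)\rceil$. You then compute the risk exactly as $(n+1-k)/(n+1)$ from the uniform rank of $s_{n+1}$.
\begin{itemize}
\item Your argument is more elementary and strictly sharper. It yields $\alpha-\frac{1}{n+1}<R(\hat\lambda)\le\alpha$, recovering Theorem~\ref{thm:risk_control} at the same time.
\item You also treat the case $\alpha<\frac{1}{n+1}$, where $\hat\lambda=+\infty$, which the paper passes over.
\item You derive atomlessness of the critical score $s_i$ from the per-model assumption, via $\{s_i=\lambda\}\subseteq\bigcup_{m\in\mathcal{M}'}\{s(\bm{X}_i,m)=\lambda\}$. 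The paper's jump lemma instead simply assumes it.
\item The paper's argument buys generality. It needs no closed form for $\hat\lambda$, and it applies verbatim to any non-increasing, right-continuous, bounded loss whose empirical risk has jumps of at most $\frac{1}{n+1}$. It also mirrors the structure of the paper's upper-bound proof.
\end{itemize}

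\textbf{A correction to your Step 4 remark.} The extra $\frac{1}{n+1}$ of slack in the stated bound does not cover a general discrete $\Lambda$. A coarse grid can make $\hat\lambda$ skip past many order statistics. For example, if all scores lie in $(0,1)$, $\Lambda=\{0,100\}$, and $\alpha\ge\frac{1}{n+1}$, then $\hat\lambda=100$ and the risk is $0$, while $\alpha-\frac{2}{n+1}$ can be positive.

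So the lower bound genuinely requires $\Lambda=\mathbb{R}$, or a grid containing the scores. This is what the paper's proof also implicitly assumes when it takes infima and left limits over all real $\lambda$. The slack in the paper's constant is an artifact of its proof technique, not of discretization. Your normalization is also indexed correctly, so no off-by-one allowance is needed.
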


We prove it in Appendix~\ref{app:proof_th_risk_lower}. 
While Theorem~\ref{thm:risk_control} guarantees validity by upper-bounding the risk,
Theorem~\ref{thm:lower_bound} complements it from the other side and shows that RACER is not overly conservative. Together, these results imply that the calibrated procedure achieves a near-tight risk guarantee:
\[
\alpha-\frac{2}{n+1}\ \le\ R(\hat{\lambda})\ \le\ \alpha.
\]
Hence, the achieved risk is within $O(1/n)$ of the target level $\alpha$. As the calibration size $n$ increases, this gap vanishes, indicating that RACER approaches the target risk level.

\section{Experiments}
In this section, we present the experimental results to validate two key hypotheses: 
(i) RACER achieves rigorous risk control;
(ii) The final aggregated output can improve downstream performance over base routers and full models.
We validate these hypotheses across multiple benchmarks and three base routers. 
% The experimental setup is provided in Section \ref{subsec:setup}.
% Results on risk control and aggregation accuracy are discussed thoroughly in Section \ref{subsec:result}.

\begin{table*}[t]
\centering
\caption{\textbf{Performance comparison on diverse benchmarks.} We report the \textbf{mean and standard deviation ($\pm$ std)} of accuracy across \textbf{100 independent trials}. The ``Average'' column represents the average accuracy across all four benchmarks. ``Base'' refers to the base routers, while ``+ RACER'' indicates our proposed paradigm with aggregation (w/ Agg.). RACER-G and RACER-P denote two RACER variants with different non-conformity score definitions, based on the \textit{router score-gap} and \textit{inverse irobability}, respectively. The best results in each column are \textbf{bolded}, and the second best are \underline{underlined}. }
\label{tab:main_accuracy}
\renewcommand\arraystretch{1.0}
\resizebox{1.00\textwidth}{!}{%
\setlength{\tabcolsep}{6mm}{
% \small
\footnotesize
    \begin{tabular}{lcccc|c}
        \toprule
        \textbf{Method} & \textbf{GSM8K} & \textbf{MMLU} & \textbf{C-MMLU} & \textbf{ARC-C} & \textbf{Average} \\
        \midrule
        \multicolumn{6}{l}{\textit{Candidate Models}} \\
        \midrule
        \rowcolor{gray!10} Chinese-Mistral-7B-v0.1 & 42.7$\pm$1.4 & 57.5$\pm$0.8 & 49.1$\pm$1.0 & 45.3$\pm$2.4 & 48.7 \\
        \rowcolor{gray!10} Dolphin-2.6-Mistral-7b & 54.7$\pm$1.5 & 60.7$\pm$0.8 & 44.2$\pm$0.9 & 53.7$\pm$2.1 & 53.3 \\
        \rowcolor{gray!10} Dolphin-2.9-Llama3-8b & 75.0$\pm$1.2 & 59.6$\pm$0.8 & 43.9$\pm$0.9 & 49.8$\pm$2.2 & 57.1 \\
        \rowcolor{gray!10} Meta-Llama-3-8B & 47.1$\pm$1.5 & 64.6$\pm$0.8 & 51.2$\pm$0.9 & 50.5$\pm$2.2 & 53.4 \\
        \rowcolor{gray!10} MetaMath-Mistral-7B & 75.1$\pm$1.2 & 59.9$\pm$0.8 & 44.4$\pm$1.0 & 48.7$\pm$2.3 & 57.0 \\
        \rowcolor{gray!10} Mistral-7B-v0.1 & 37.5$\pm$1.3 & 62.1$\pm$0.7 & 44.6$\pm$1.0 & 50.7$\pm$2.1 & 48.7 \\
        \rowcolor{gray!10} Zephyr-7b-beta & 34.0$\pm$1.3 & 59.6$\pm$0.8 & 43.5$\pm$1.0 & 57.6$\pm$2.1 & 48.7 \\
        \midrule
        \multicolumn{6}{l}{\textit{Routers \& RACER}} \\
        \midrule
        MLPR & 75.1$\pm$1.2 & 59.9$\pm$0.8 & 44.4$\pm$1.0 & 48.7$\pm$2.3 & 57.0 \\
        + RACER-G (w/ Agg.) & 76.9$\pm$1.4 & 63.3$\pm$0.9 & 48.3$\pm$1.2 & 52.5$\pm$2.7 & 60.3 \\
        + RACER-P (w/ Agg.) & \underline{77.8}$\pm$1.2 & \textbf{63.7}$\pm$1.0 & 48.4$\pm$1.2 & 52.3$\pm$2.6 & 60.6 \\
        \midrule
        KNNR & 74.1$\pm$1.2 & 62.5$\pm$0.8 & 48.1$\pm$0.9 & 54.6$\pm$2.2 & 59.8 \\
        + RACER-G (w/ Agg.) & 76.6$\pm$1.1 & \underline{63.4}$\pm$0.8 & 48.8$\pm$0.9 & 54.5$\pm$2.3 & 60.8 \\
        + RACER-P (w/ Agg.) & 77.3$\pm$1.6 & \underline{63.4}$\pm$1.0 & 49.0$\pm$0.9 & 55.0$\pm$2.2 & 61.2 \\
        \midrule
        RouterDC & 75.0$\pm$1.2 & 61.0$\pm$0.8 & \textbf{51.0}$\pm$0.9 & \underline{56.7}$\pm$2.3 & 60.9 \\
        + RACER-G (w/ Agg.) & 77.4$\pm$1.2 & 62.9$\pm$1.0 & \underline{50.7}$\pm$1.0 & 56.4$\pm$2.4 & \underline{61.9} \\
        + RACER-P (w/ Agg.) & \textbf{77.9}$\pm$1.2 & 63.0$\pm$0.9 & 50.6$\pm$1.0 & \textbf{56.8}$\pm$2.5 & \textbf{62.1} \\
        \bottomrule
    \end{tabular}%
}
}
\end{table*}

\begin{figure*}
    \centering
    \includegraphics[width=0.8\linewidth]{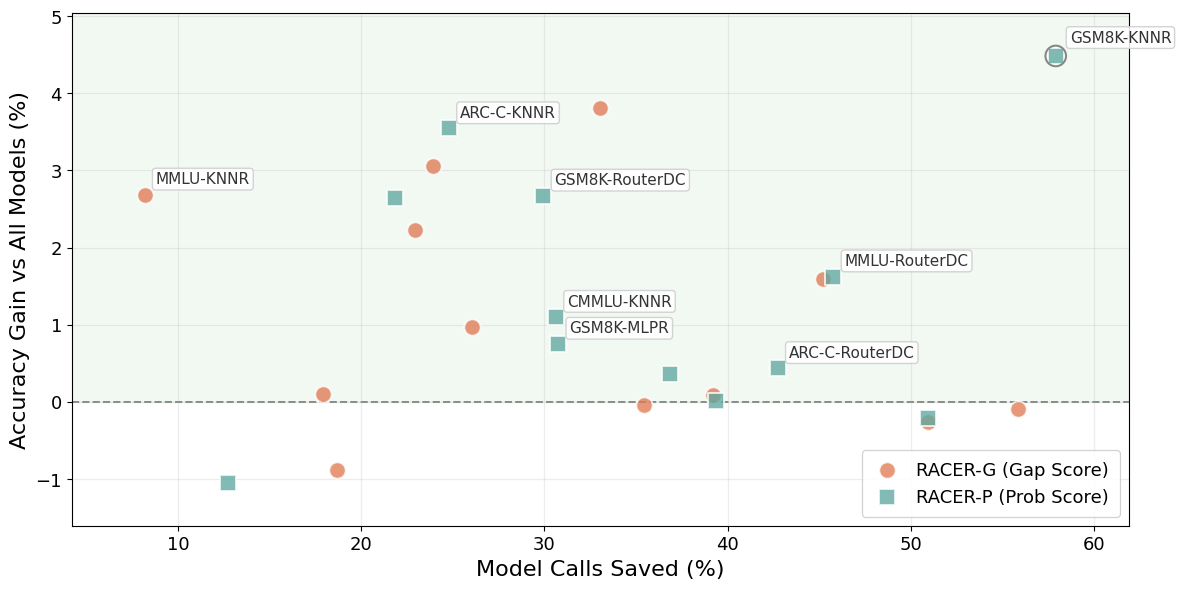}
    \caption{\textbf{Trade-off between computational efficiency and performance.} The scatter plot illustrates the reduction in inference overhead (Model Calls Saved) versus the absolute improvement in test accuracy (Accuracy Gain) for RACER relative to the full model ensemble. The concentration of data points in the upper-right region indicates that RACER effectively filters noise to achieve significant savings (up to 58.6\%) while simultaneously improving accuracy (up to +4.49\%) across various dataset-router configurations.}
    \label{fig:model_calls_save}
\end{figure*}

\subsection{Setup}
\label{subsec:setup}

\paragraph{Datasets.}
We evaluate RACER across four diverse benchmarks:\textbf{MMLU} \cite{hendrycks2021measuring}, covering 57 subjects in general knowledge; \textbf{GSM8K} \cite{cobbe2021training}, focused on grade-school mathematics; \textbf{CMMLU} \cite{li2024cmmlu}, a comprehensive Chinese benchmark spanning 67 subjects; and \textbf{ARC-Challenge} \cite{clark2018think}, designed for reasoning tasks.
We construct a unified training set for base routers and partition the remaining data into calibration, validation, and test sets (details in Appendix~\ref{app:implementation_details}).
% To train the base routers, we construct a unified global training set $\mathcal{D}_{\text{train}}$ by merging training splits from all benchmarks.
% The remaining data is partitioned into validation, calibration, and test sets for parameter selection, threshold calibration, and final evaluation, respectively. 
% We repeat this procedure over 100 random splits, with
% comprehensive details regarding dataset splitting and ground-truth label construction provided in Appendix~\ref{app:implementation_details}.

\paragraph{Candidate LLMs and base routers.}
We use a pool of seven open-source LLMs: (i) \textit{Mistral-7B} \cite{jiang2023mistral7b}; (ii) \textit{MetaMath-Mistral-7B} \cite{yu2024metamath}; (iii) \textit{Zephyr-7b-beta} \cite{tunstall2023zephyr}; (iv) \textit{Chinese-Mistral-7B} \cite{Chinese-Mistral}; (v) \textit{Dolphin-2.6-mistral-7b} \cite{dphn2024dolphinmistral}; (vi) \textit{Llama-3-8B} \cite{grattafiori2024llama}; (vii) \textit{Dolphin-2.9-llama3-8b} \cite{hartford2024dolphin}. 
% The first five LLMs are Mistral-based, while the last two LLMs are Llama-3-based.
We choose the following base routers: RouterDC~\cite{chen2024routerdc},  MLPR~\cite{huang2025routereval} and KNNR~\cite{hu2024routerbench}. 
Detailed introduction, hyperparameter settings, and configurations for each base router are provided in Appendix~\ref{app:implementation_details}. 

% \paragraph{Baselines.}

% \begin{itemize}[itemsep=3pt, parsep=0pt, topsep=1pt]
% 	\item RouterDC \cite{chen2024routerdc}. RouterDC consists of an encoder and LLM embeddings, utilizing dual contrastive learning to train the router.
% 	\item MLPR \cite{huang2025routereval}. A multi-layer perceptron (MLP) classifier is trained as the router to predict model performance score.
% 	\item KNNR \cite{hu2024routerbench}. A k-nearest neighbors (KNN) classifier is trained as the router, using the classification probabilities of each LLM as the router score.
% \end{itemize} 
% After training, we apply our RACER paradigm as a post-processing step on the trained routers. 

\paragraph{Augmented scoring and non-conformity scores.}
To calculate the augmented router score in Eq.~\eqref{eq:aug_score}, we design $\phi(\cdot)$ to quantify \textit{system uncertainty}, which increases when base scores are uniform (i.e., the router is ambiguous). We adopt a \emph{max-based} strategy: $r(\bm{x}, m_\emptyset) = 1 - \max_{k \in \mathcal{M}} f(\bm{x}, k)$, effectively captures the absence of a dominant candidate LLM. 
Based on these augmented scores, we implement Algorithm \ref{alg:racer} using two distinct non-conformity scores: \textit{Router Score-Gap} and \textit{Inverse Probability}. We apply randomized smoothing to resolve numerical ties. Detailed definitions of these scores are provided in Appendix~\ref{app:implementation_details}.
\paragraph{Metrics.}
We assess performance using three metrics: \textit{Risk} (the misrouting risk of the set), \textit{Size} (the average number of candidate LLMs in the set), and \textit{Accuracy} (the final aggregated performance).
Formal definitions are detailed in Appendix~\ref{app:implementation_details}.

% To evaluate the effectiveness of the RACER paradigm, we employ three key metrics: average \textit{Risk}, \textit{Size} and \textit{Accuracy}. These metrics are formally defined as:
% \begin{equation*}
% 	\begin{aligned}
% 		& \text{Risk} := \frac{1}{N}\sum_{i=1}^{N} \mathbf{1}(C_{\hat \lambda}(\bm{x}_i)\cap G'_i = \emptyset), \\
%         & \text{Size} := \frac{1}{N}\sum_{i=1}^{N} \left| C_{\hat \lambda}(\bm{x}_i) \cap \mathcal{M} \right|,\\
% 		& \text{Accuracy} := \frac{1}{N}\sum_{i=1}^{N}\mathbf{1}(\hat{y}_i =y^*_i ).
% 	\end{aligned}
% \end{equation*}
% where $N$ is the number of test queries, $G'_i$ denotes the augmented ground truth set, $\hat{y}_i$ and $y^*_i$ represent the final aggregated output of RACER and the correct answer, respectively. 
% Crucially, the definition of \textit{size} uses the intersection with $\mathcal{M}$ to explicitly exclude the virtual null model $m_\emptyset$. This ensures the metric faithfully reflects the actual inference overhead by counting only the candidate LLMs.

% \paragraph{Implementation Details.}

\subsection{Results}
\label{subsec:result}

\paragraph{RACER strictly controls the risk of excluding all optimal LLMs.}
Figure~\ref{fig:example_risk_size} presents the distributions of empirical risk and prediction set size on CMMLU ($\alpha=0.1$).
First, RACER achieves rigorous risk control: across all base routers and non-conformity measures, the empirical risk consistently satisfies the theoretical bound of the target level $\alpha$.
Second, the average size of prediction sets effectively reflects the capability of the underlying base router; RouterDC yields the most compact sets compared to MLPR and KNNR, indicating superior ranking performance. 
Finally, the \emph{inverse probability} non-conformity score consistently produces smaller sets than the router score-gap. Notably, the size gap between the two non-conformity measures is more pronounced on less capable base routers (e.g., KNNR).
%highlighting the superior efficiency of the \textit{inverse probability} score.
% These findings are consistent across all benchmarks, as detailed in the full results in Appendix~\ref{app:detail_risk_size}.

\paragraph{RACER significantly enhances downstream accuracy via aggregation.} 
We instantiate RACER with two scoring variants:
RACER-G (router score-gap) and RACER-P (inverse probability). 
In the following analysis, we use RACER to refer to the complete method incorporating aggregation strategies (w/ Agg.). Table~\ref{tab:main_accuracy} presents the performance comparison.
First, RACER consistently improves over base routers.
% Although RouterDC leaves limited room for improvement on certain tasks, RACER still outperforms it in overall average accuracy across all datasets.
Notably, RACER-P achieves a maximum accuracy gain of 4.0\% on a single benchmark (MLPR on C-MMLU) and a maximum average improvement of 3.6\% across all benchmarks (MLPR).
% , validating its ability to robustly enhance routing methods.
Second, RACER surpasses the best individual LLM on average.
In terms of overall average accuracy, RACER attains 62.1\%, exceeding the best individual LLM in the candidate model pool (57.1\%) by 5.0\%.
Overall, these results demonstrate that RACER effectively mitigates the limitations of single-model selection.
% and translates controlled risk into stronger downstream performance through aggregation. 

\paragraph{RACER achieves higher accuracy while reducing model calls against full models.}
We evaluate the benefits of RACER paradigm compared to the \textit{full model aggregation} baseline.
Figure~\ref{fig:model_calls_save} visualizes the trade-off between model calls reduction and accuracy gain. Ideally, a method should reside in the upper-right quadrant. The results demonstrate that RACER achieves a ``win-win'' outcome: it significantly reduces the number of model calls, saving up to \textbf{58.6\%} of model calls, while simultaneously boosting test accuracy by up to \textbf{4.49\%} over the full ensemble. This indicates that the models excluded by RACER are not merely redundant but often detrimental to the aggregation process, allowing for substantial resource savings without compromising downstream task performance.

\paragraph{Additional results.}
Due to space limitations, we present further analyses in the Appendix. 
While Figure~\ref{fig:example_risk_size} illustrates representative results, Appendix~\ref{app:detail_risk_size} provides comprehensive distributional analyses of risk and size at $\alpha=0.1$ across all experimental settings.
Appendix~\ref{app:detail_acc} visualizes the corresponding optimal aggregation hyperparameters used in Table~\ref{tab:main_accuracy}. 
Appendix~\ref{app:ex_multiple_alpha} investigates the evolution of RACER's risk control and efficiency as $\alpha$ varies. 
% Finally, Appendix~\ref{app:vs_full_model} compares RACER against the full model aggregation, demonstrating that RACER achieves higher accuracy while reducing model calls by up to $58.6\%$.
Finally, Appendix~\ref{app:vs_full_model} supplements the accuracy comparison between RACER and full model aggregation.

\section{Related work}

Our work intersects with two key areas of research: efficient model routing in multi-LLM systems and conformal prediction for reliable risk control. 

\paragraph{LLM routing.}
The emergence of LLMs has prompted extensive research efforts aimed at optimizing the cost-performance trade-off in multi-model systems \cite{chen2025harnessing}. Early approaches focused on cascading strategies with high computational cost \cite{chen2024frugalgpt}. Predictive routers reduce cost by directly selecting models without calling for all LLMs \cite{hu2024routerbench, shnitzer2023large, srivatsa2024harnessing}. To enhance accuracy, various architectures leveraging contrastive learning \cite{chen2024routerdc}, graph networks \cite{feng2025graphrouter}, or preference-based ranking \cite{jiang2023llm, ong2025routellm} have been proposed. 
More recently, adaptive paradigms have emerged to optimize test-time compute via thresholds \cite{ding2024hybrid, ding2025bestroute} or reinforcement learning \cite{zhang2025router}. 
Our work introduce a universal, post-hoc paradigm designed to seamlessly enhance any of these deterministic routers with rigorous risk control capabilities.

\paragraph{Predictive inference.}
Predictive inference aims to quantify uncertainty and provide rigorous statistical guarantees for model outputs, exemplified by frameworks like Conformal Prediction (CP) \cite{vovk2005algorithmic,angelopoulos2021gentle} and Conformal Risk Control (CRC) \cite{angelopoulos2024conformal}. While CP constructs valid prediction sets independent of the data distribution, CRC generalizes this to control the expected value of arbitrary monotone loss functions.
In the context of LLMs, these techniques have been adapted to ensure generation validity \cite{quach2024conformal,kumar2023conformal} and control performance loss in efficient reasoning \cite{zeng2025note,zeng2025pac}.
Most relevant to our work, CP-Router \cite{su2025cp} utilizes uncertainty estimates for binary routing between an LLM and an LRM.
However, unlike CP-Router's specific switching mechanism, RACER applies calibration strategy directly to the \textit{router's decision space}, proposing a universal paradigm that constructs calibrated model sets to strictly satisfy the $\alpha$-VOR constraint across arbitrary model pools.

\section{Conclusion}

In this paper, we presented RACER, a novel post-hoc and model-agnostic paradigm designed to optimize the cost-performance trade-off in multi-model systems. By formulating the routing task as the $\alpha$-Valid Optimal Routing ($\alpha$-VOR) problem, RACER transforms standard single-model selection into calibrated set predictions. 
% This paradigm also supports abstention when no suitable candidate is identified.
Theoretically, we established rigorous, distribution-free guarantees, ensuring that the risk is controlled below a user-specified level, while also providing a risk lower bound to verify non-conservativeness.
Empirically, RACER maintains precise risk control across diverse benchmarks while significantly enhancing downstream accuracy via aggregation, outperforming both base routers and the best individual LLMs.
% This paradigm generalizes naturally to various large generative models as long as a routing mechanism exists.
As a plug-and-play solution applicable to diverse generative models utilizing arbitrary scoring functions, RACER grounds multi-model deployment in a solid statistical framework, and we hope this work facilitates future research into risk-aware routing for complex agentic workflows.

\section*{Impact Statement}
This paper presents work whose goal is to advance the field of machine learning. There are many potential societal consequences of our work, none of which we feel must be specifically highlighted here.

% In the unusual situation where you want a paper to appear in the references without citing it in the main text, use \nocite
%\nocite{langley00}

\bibliography{ref}
\bibliographystyle{style/icml2026}

%%%%%%%%%%%%%%%%%%%%%%%%%%%%%%%%%%%%%%%%%%%%%%%%%%%%%%%%%%%%%%%%%%%%%%%%%%%%%%%
%%%%%%%%%%%%%%%%%%%%%%%%%%%%%%%%%%%%%%%%%%%%%%%%%%%%%%%%%%%%%%%%%%%%%%%%%%%%%%%
% APPENDIX
%%%%%%%%%%%%%%%%%%%%%%%%%%%%%%%%%%%%%%%%%%%%%%%%%%%%%%%%%%%%%%%%%%%%%%%%%%%%%%%
%%%%%%%%%%%%%%%%%%%%%%%%%%%%%%%%%%%%%%%%%%%%%%%%%%%%%%%%%%%%%%%%%%%%%%%%%%%%%%%
\newpage
\appendix
\onecolumn

\section{Proofs of theoretical results}
\subsection{Proof of Lemma~\ref{lem:nested}}
\label{app:proof_lem_nested}

\begin{proof}
For a query $\bm{x}\in\mathcal{X}$ and thresholds
$\lambda_1,\lambda_2\in\mathbb{R}$ such that $\lambda_1\le \lambda_2$.
Recall the definition of the routing model set in Eq.~\eqref{eq:prediction_set_def}:
\[
C_\lambda(\bm{x})
= \left\{ m \in \mathcal{M}' : s(\bm{x}, m) \le \lambda \right\}.
\]

To prove $C_{\lambda_1}(\bm{x}) \subseteq C_{\lambda_2}(\bm{x})$,
take any model $m\in C_{\lambda_1}(\bm{x})$.
By the definition of $C_{\lambda_1}(\bm{x})$, we have
\[
s(\bm{x},m) \le \lambda_1.
\]
Since $\lambda_1\le \lambda_2$, it follows that
\[
s(\bm{x},m) \le \lambda_1 \le \lambda_2,
\]
which implies $s(\bm{x},m)\le \lambda_2$.
Therefore, by the definition of $C_{\lambda_2}(\bm{x})$, we conclude that
$m\in C_{\lambda_2}(\bm{x})$.

Since the choice of $m\in C_{\lambda_1}(\bm{x})$ was arbitrary, we conclude that
\[
C_{\lambda_1}(\bm{x}) \subseteq C_{\lambda_2}(\bm{x}).
\]
This establishes the nestedness property and completes the proof of Lemma~\ref{lem:nested}.
\end{proof}

\subsection{Proof of Lemma~\ref{lem:loss}}
\label{app:proof_lem_loss}

\begin{proof}
Fix an arbitrary pair $(\bm{x},G'(\bm{x}))$ and recall the RACER loss in
Eq.~\eqref{eq:loss}:
\[
l(\bm{x}, G'(\bm{x}); \lambda)
= \mathbf{1}\Bigl\{ C_\lambda(\bm{x}) \cap G'(\bm{x}) = \emptyset \Bigr\}.
\]

\paragraph{Step 1: Monotonicity in $\lambda$.}
Let $\lambda_1\le \lambda_2$.
By Lemma~\ref{lem:nested}, we have
$C_{\lambda_1}(\bm{x})\subseteq C_{\lambda_2}(\bm{x})$.
Intersecting both sides with $G'(\bm{x})$ yields
\[
C_{\lambda_1}(\bm{x})\cap G'(\bm{x})
\subseteq
C_{\lambda_2}(\bm{x})\cap G'(\bm{x}).
\]
Consequently, the event that the larger intersection is empty implies the
smaller one is also empty, i.e.,
\[
\Bigl\{C_{\lambda_2}(\bm{x})\cap G'(\bm{x})=\emptyset\Bigr\}
\subseteq
\Bigl\{C_{\lambda_1}(\bm{x})\cap G'(\bm{x})=\emptyset\Bigr\}.
\]
Taking indicator functions on both sides gives
\[
l(\bm{x},G'(\bm{x});\lambda_2)
\le
l(\bm{x},G'(\bm{x});\lambda_1),
\]
which proves that $l(\bm{x},G'(\bm{x});\lambda)$ is non-increasing in $\lambda$.

\paragraph{Step 2: Right-continuity.}

Recall that the routing model set is defined as $C_\lambda(\bm{x}) = \{ m \in \mathcal{M}' : s(\bm{x}, m) \le \lambda \}$.
The loss function $l(\bm{x}, G'(\bm{x}); \lambda)$ takes the value $0$ if and only if $C_\lambda(\bm{x}) \cap G'(\bm{x}) \neq \emptyset$, which means there exists at least one valid model $m \in G'(\bm{x})$ such that $s(\bm{x}, m) \le \lambda$.

Let $S_{\min}(\bm{x})$ denote the minimum non-conformity score among the valid models:$$S_{\min}(\bm{x}) := \min_{m \in G'(\bm{x})} s(\bm{x}, m).$$Since $G'(\bm{x})$ is non-empty (it contains at least the null model) and finite, $S_{\min}(\bm{x})$ is well-defined.The condition for zero loss can thus be rewritten as $\lambda \ge S_{\min}(\bm{x})$. Conversely, the loss is $1$ if and only if $\lambda < S_{\min}(\bm{x})$.We can explicitly write the loss function as a step function:$$l(\bm{x}, G'(\bm{x}); \lambda) =
\begin{cases}
1, & \text{if } \lambda < S_{\min}(\bm{x}), \\
0, & \text{if } \lambda \ge S_{\min}(\bm{x}).
\end{cases}$$

The function is piecewise constant and the only point of discontinuity is at $\lambda^* = S_{\min}(\bm{x})$. To verify right-continuity at $\lambda^*$, we observe:
$$\lim_{\epsilon \downarrow 0} l(\bm{x}, G'(\bm{x}); \lambda^* + \epsilon) = 0,$$
since $\lambda^* + \epsilon > S_{\min}(\bm{x})$.
Also, by the definition of the case $\lambda \ge S_{\min}(\bm{x})$, the function value at the point is $l(\bm{x}, G'(\bm{x}); \lambda^*) = 0$.
Since the limit from the right equals the function value, $l(\bm{x}, G'(\bm{x}); \lambda)$ is right-continuous.

\paragraph{Step 3: Almost sure boundedness.}
For any $\lambda$, the loss is an indicator function and therefore takes values
in $\{0,1\}$. Hence, for random $(\bm{X},G'(\bm{X}))$,
\[
0 \le l(\bm{X},G'(\bm{X});\lambda) \le 1
\quad \text{almost surely}.
\]
This completes the proof of Lemma~\ref{lem:loss}.
\end{proof}

\subsection{Proof of Theorem \ref{thm:risk_control}}
\label{app:proof_th_risk_control}

\begin{proof}
Let $\mathcal{D}_{cal} = \{(\bm{X}_i, G'(\bm{X}_i))\}_{i=1}^n$ be the calibration set and $(\bm{X}_{n+1}, G'(\bm{X}_{n+1}))$ denotes the random, unseen data. We assume that the sequence $\{(\bm{X}_i, G'(\bm{X}_i))\}_{i=1}^{n+1}$ is exchangeable.

Recall the definition of the calibrated threshold $\hat{\lambda}$:
\begin{equation} 
\hat{\lambda} = \inf \left\{ \lambda \in \mathbb{R} : \frac{1}{n+1}\sum_{i=1}^n L_i(\lambda) + \frac{1}{n+1} \le \alpha \right\}. 
\label{eq:proof_lambda_def} 
\end{equation}
From Lemma \ref{lem:loss}, we know that the loss is non-increasing and bounded by 0 (the minimum possible loss is 0, and we assume $\alpha > 0$), this set is non-empty and $\hat{\lambda}'$ is well-defined.
Let $\hat{R}_{n+1}(\lambda) = \frac{1}{n+1} \sum_{i=1}^{n+1} L_i(\lambda)$.
We define the virtual threshold $\hat{\lambda}'$ as:
\begin{equation*}
    \hat{\lambda}' = \inf \left\{ \lambda \in \mathbb{R} : \hat{R}_{n+1}(\lambda) \le \alpha \right\}.
\end{equation*}
% Since the loss is non-increasing and bounded by 0, this set is non-empty and $\hat{\lambda}'$ is well-defined.
Since the loss $L_i(\lambda) = \mathbf{1}\{C_\lambda(\bm{X}_i) \cap G'(\bm{X}_i) = \emptyset\} \in [0, 1]$, we observe that for any $\lambda$:
\begin{equation*}
    \begin{aligned}
        \hat{R}_{n+1}(\lambda) = &\frac{1}{n+1}\sum_{i=1}^n L_i(\lambda) + \frac{L_{n+1}(\lambda)}{n+1}\\
        \leq & \frac{1}{n+1}\sum_{i=1}^n L_i(\lambda) + \frac{1}{n+1}.
    \end{aligned}
\end{equation*}

Since each $L_i(\lambda)$ is a non-increasing function of $\lambda$. 
If a specific $\lambda$ satisfies the condition for $\hat{\lambda}$ (i.e., LHS of Eq.~\eqref{eq:proof_lambda_def} $\le \alpha$), the inequality above implies that $\hat{R}_{n+1}(\lambda) \le \alpha$. Thus, $\hat{\lambda}' \le \hat{\lambda}$  almost surely, which implies $L_{n+1}(\hat{\lambda}) \le L_{n+1}(\hat{\lambda}')$. Taking the expectation:
\begin{equation} 
    \mathbb{E}[L_{n+1}(\hat{\lambda})] \le \mathbb{E}[L_{n+1}(\hat{\lambda}')]. \label{eq:proof_expect_ineq} 
\end{equation}

Let $E = \bigl\{(\bm{X}_i, G'(\bm{X}_i))\bigr\}_{i=1}^{n+1}$ be the unordered multiset of data points, the virtual threshold $\hat{\lambda}'$ depends only on the set $E$. Due to exchangeability, conditional on $E$,  $\hat{\lambda}'$ is fixed, and $\bm{X}_{n+1}$ is uniformly distributed over $E$.
Thus, the conditional expectation of the loss at $\hat{\lambda}'$ is:
\begin{equation*}
    \mathbb{E}[L_{n+1}(\hat{\lambda}') \mid E] = \frac{1}{n+1} \sum_{i=1}^{n+1} L_i(\hat{\lambda}') = \hat{R}_{n+1}(\hat{\lambda}').
\end{equation*}
By the definition of $\hat{\lambda}'$ and the right-continuity of the loss function (Lemma \ref{lem:loss}), we have $\hat{R}_{n+1}(\hat{\lambda}') \le \alpha$. Therefore $\mathbb{E}[L_{n+1}(\hat{\lambda}') \mid E] \le \alpha.$
Applying the law of total expectation to Eq.~\eqref{eq:proof_expect_ineq}:
\begin{equation*}
    \mathbb{E}[L_{n+1}(\hat{\lambda})] \le \mathbb{E}[\mathbb{E}[L_{n+1}(\hat{\lambda}') \mid E]] \le \mathbb{E}[\alpha] = \alpha.
\end{equation*}
This completes the proof of Theorem \ref{thm:risk_control}.
\end{proof}

\subsection{Proof of Theorem \ref{thm:lower_bound}}
\label{app:proof_th_risk_lower}

The proof of Theorem \ref{thm:lower_bound} relies on the following lemma regarding the approximate continuity of the empirical risk.

% We first introduce the \textit{jump function}, which quantifies the magnitude of the discontinuity of a right-continuous function $f$ at point $\lambda$:
% \begin{equation}
%     J(f, \lambda) = \lim_{\epsilon \to 0^+} f(\lambda - \epsilon) - f(\lambda).
% \end{equation}
% For a continuous function, the jump is zero. For a step function, it represents the height of the step.

% \begin{lemma}[Jump Lemma]
% \label{lem:jump}
% In the setting of RACER, assume that the non-conformity scores follow a continuous distribution, such that $\mathbb{P}(s(\bm{X}, m) = \lambda) = 0$ for any fixed $\lambda$. Let $\hat{R}_{n+1}(\lambda)$ be the empirical risk on the augmented dataset of size $n+1$. Then, the jumps in the empirical risk are bounded by the weight of a single sample:
% \begin{equation}
%     \sup_{\lambda} J(\hat{R}_{n+1}, \lambda) \le \frac{1}{n+1} \quad \text{almost surely.}
% \end{equation}
% \end{lemma}

\begin{lemma}[Jump Lemma]
\label{lem:jump}
In the setting of RACER, assume that the non-conformity scores follow a continuous distribution, such that $\mathbb{P}(s(\bm{X}, G'(\bm{X})) = \lambda) = 0$ for any fixed $\lambda$. Let $\hat{R}_{n+1}(\lambda)$ be the empirical risk on the augmented dataset of size $n+1$. Then, the magnitude of any discontinuity in the empirical risk is bounded by a single sample's weight:
\begin{equation*}
    \sup_{\lambda} \left( \lim_{\epsilon \to 0^+} \hat{R}_{n+1}(\lambda - \epsilon) - \hat{R}_{n+1}(\lambda) \right) \le \frac{1}{n+1} \quad \text{almost surely.}
\end{equation*}
\end{lemma}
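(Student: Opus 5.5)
The plan is to write the empirical risk explicitly as a sum of step functions and show that, almost surely, no two of the steps occur at the same location. By the right-continuity step in the proof of Lemma~\ref{lem:loss}, each $L_i(\lambda)=\mathbf{1}\{s_i>\lambda\}$, where $s_i=\min_{m\in G'_i}s(\bm{X}_i,m)$ is the critical score of the $i$-th point. Hence
\[
\hat{R}_{n+1}(\lambda)=\frac{1}{n+1}\sum_{i=1}^{n+1}\mathbf{1}\{s_i>\lambda\}.
\]
For a fixed $\lambda$, the left limit $\lim_{\epsilon\to0^+}\mathbf{1}\{s_i>\lambda-\epsilon\}$ equals $\mathbf{1}\{s_i\ge\lambda\}$. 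The jump of $\hat{R}_{n+1}$ at $\lambda$ is therefore
\[
\frac{1}{n+1}\sum_{i=1}^{n+1}\bigl(\mathbf{1}\{s_i\ge\lambda\}-\mathbf{1}\{s_i>\lambda\}\bigr)=\frac{\#\{i: s_i=\lambda\}}{n+1}.
\]
This expression is exact and deterministic. The lemma thus reduces to showing that, almost surely, the critical scores $s_1,\dots,s_{n+1}$ are pairwise distinct. Once that holds, the supremum over $\lambda$ of the jump is at most $1/(n+1)$, because each $\lambda$ coincides with at most one $s_i$.

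The remaining step is to prove $\mathbb{P}(s_i=s_j)=0$ for $i\neq j$, and then take a union bound over the finitely many pairs. I read the hypothesis $\mathbb{P}(s(\bm{X},G'(\bm{X}))=\lambda)=0$ as saying that the critical score $S=\min_{m\in G'(\bm{X})}s(\bm{X},m)$ has an atomless distribution. Under the i.i.d.\ assumption of Theorem~\ref{thm:lower_bound}, $s_i$ and $s_j$ are independent copies of $S$. Conditioning on $s_j$ and using Fubini gives
\[
\mathbb{P}(s_i=s_j)=\mathbb{E}\bigl[\mathbb{P}(s_i=t)\big|_{t=s_j}\bigr]=0.
\]
If the hypothesis is instead stated per model, $\mathbb{P}(s(\bm{X},m)=\lambda)=0$ for each $m$, I would first show that $S$ is atomless. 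Since $\{S=\lambda\}\subseteq\bigcup_{m\in\mathcal{M}'}\{s(\bm{X},m)=\lambda\}$ and $\mathcal{M}'$ is finite, a finite union bound gives $\mathbb{P}(S=\lambda)=0$.

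The main obstacle is this last point: the atomless property has to transfer from the per-model scores to the data-dependent minimum over $G'(\bm{X})$. The union bound handles it, but it needs a per-model continuity assumption for every $m\in\mathcal{M}'$, including the null model $m_\emptyset$. I would also note that the randomized smoothing applied in practice is what enforces this continuity. Independence of the different sample points, not exchangeability alone, is what licenses the Fubini step. Under mere exchangeability I would instead argue directly from a joint-continuity assumption. Everything else is bookkeeping with one-sided limits of indicator functions.
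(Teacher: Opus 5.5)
Your proposal is correct and follows the same route as the paper. Both arguments say that the jump of $\hat{R}_{n+1}$ at $\lambda$ counts the samples whose critical score equals $\lambda$, and that almost-sure pairwise distinctness of the $s_i$ bounds it by $\frac{1}{n+1}$. Your version makes explicit three steps the paper only asserts: the exact formula $\#\{i : s_i = \lambda\}/(n+1)$, the Fubini argument that uses the i.i.d.\ assumption of Theorem~\ref{thm:lower_bound} (exchangeability alone would not suffice), and the union-bound transfer of continuity from the per-model scores to the critical score.
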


\begin{proof}[Proof of Lemma \ref{lem:jump}]
Recall that the RACER loss function is an indicator $L_i(\lambda) \in \{0, 1\}$. The empirical risk is given by $\hat{R}_{n+1}(\lambda) = \frac{1}{n+1}\sum_{i=1}^{n+1} L_i(\lambda)$.
Since the loss is monotonic, a discontinuity occurs at $\lambda$ only if $L_i(\lambda)$ changes value at $\lambda$ for some sample $i$. This happens precisely when $\lambda$ equals the non-conformity score of that sample.
Since the scores are drawn from a continuous distribution, the probability that two samples have the exact same score is zero (i.e., $\mathbb{P}(s_i = s_j) = 0$ for $i \neq j$). Therefore, almost surely, at any threshold $\lambda$, at most one sample's loss function changes value.
Consequently, the maximum change in the sum $\sum L_i(\lambda)$ is 1, and the difference between the left limit and the value at $\lambda$ for the average $\hat{R}_{n+1}(\lambda)$ is bounded by $\frac{1}{n+1}$.
\end{proof}

% The proof follows the strategy of Theorem 2 in \citet{angelopoulos2021learn}.
\begin{proof}[Proof of Theorem~\ref{thm:lower_bound}]
Let $L_i(\lambda) = l(\bm{X}_i, G'(\bm{X}_i); \lambda)$ denote the loss on the $i$-th calibration point. Recall that in the RACER framework, the loss is an indicator function, bounded by $1$, and is non-increasing with respect to $\lambda$ (Lemma~\ref{lem:loss}).
Let $\hat{R}_{n+1}(\lambda) = \frac{1}{n+1}\sum_{i=1}^{n+1} L_i(\lambda)$.

Consider an auxiliary threshold $\hat{\lambda}'$ computed using the $n+1$ points:
\begin{equation}
\label{eq:proof_lambda_1}
    \hat{\lambda}'' = \inf \left\{ \lambda : \hat{R}_{n+1}(\lambda) + \frac{1}{n+1} \le \alpha \right\}.
\end{equation}

We first establish the relationship between $\hat{\lambda}$ and $\hat{\lambda}''$. Observe that
\begin{equation*}
    \hat{R}_{n+1}(\lambda) = \frac{n}{n+1}\hat{R}_n(\lambda) + \frac{L_{n+1}(\lambda)}{n+1} \ge \frac{n}{n+1}\hat{R}_n(\lambda),
\end{equation*}
since $L_{n+1}(\lambda) \ge 0$. Adding $\frac{1}{n+1}$ to both sides, we obtain
\begin{equation*}
    \hat{R}_{n+1}(\lambda) + \frac{1}{n+1} \ge \frac{n}{n+1}\hat{R}_n(\lambda) + \frac{1}{n+1}.
\end{equation*}
Since each $L_i(\lambda)$ is a non-increasing function of $\lambda$. 
If a specific $\lambda$ satisfies the condition for $\hat{\lambda}$ (i.e., LHS of Eq.~\eqref{eq:proof_lambda_1} $\le \alpha$), the inequality above implies that $\frac{n}{n+1}\hat{R}_n(\lambda) + \frac{1}{n+1} \le \alpha$. Thus, $\hat{\lambda}'' \ge \hat{\lambda}$  almost surely. Since the loss function $L_{n+1}(\lambda)$ is non-increasing, it follows that
\begin{equation*}
    \mathbb{E}[L_{n+1}(\hat{\lambda})] \ge \mathbb{E}[L_{n+1}(\hat{\lambda}'')].
\end{equation*}

By the exchangeability of the calibration and test data, $\hat{\lambda}''$ is symmetric with respect to all $n+1$ points. Thus, the expected loss on the test point equals the expected empirical risk:
\begin{equation*}
    \mathbb{E}[L_{n+1}(\hat{\lambda}'')] = \mathbb{E}\left[ \frac{1}{n+1}\sum_{i=1}^{n+1} L_i(\hat{\lambda}'') \right] = \mathbb{E}[\hat{R}_{n+1}(\hat{\lambda}'')].
\end{equation*}

Next, we focus on lower bound $\hat{R}_{n+1}(\hat{\lambda}'')$. Let $\mathcal{S} = \{ \lambda : \hat{R}_{n+1}(\lambda) + \frac{1}{n+1} \le \alpha \}$. 
By the definition of the infimum, for any $\epsilon > 0$, the value $\hat{\lambda}'' - \epsilon$ does not belong to $\mathcal{S}$. This implies:
\begin{equation*}
    \hat{R}_{n+1}(\hat{\lambda}'' - \epsilon) + \frac{1}{n+1} > \alpha.
\end{equation*}

Next, we invoke the Lemma~\ref{lem:jump} (\textit{Jump Lemma}), which bounds the size of discontinuities in the empirical risk. Given the assumption that the non-conformity scores follow a continuous distribution, the maximum jump size of $\hat{R}_{n+1}$ at any $\lambda$ is bounded by the weight of a single sample, i.e., $1/(n+1)$.
Combining this with the left limit as $\epsilon \to 0^+$, we obtain:
\begin{equation*}
\begin{aligned}
    \hat{R}_{n+1}(\hat{\lambda}'') &\ge \lim_{\epsilon \to 0^+} \hat{R}_{n+1}(\hat{\lambda}'' - \epsilon) - \frac{1}{n+1} \\
     &\ge \left(\alpha - \frac{1}{n+1}\right) - \frac{1}{n+1} \\
     &= \alpha - \frac{2}{n+1}.
\end{aligned}
\end{equation*}

Combining all parts, we conclude:
\begin{equation*}
    \mathbb{E}[L_{n+1}(\hat{\lambda})] \ge \mathbb{E}[\hat{R}_{n+1}(\hat{\lambda}'')] \ge \alpha - \frac{2}{n+1}.
\end{equation*}
This completes the proof of Theorem \ref{thm:lower_bound}.
\end{proof}

\section{Implementation details}
\label{app:implementation_details}

In this section, we provide detailed specifications regarding the weighting schemes, hyperparameter tuning, and numerical smoothing techniques employed in our experiments.

\paragraph{Ground truth set construction.}
To construct ground truth labels, we generated responses for all candidate models using the Language Model Evaluation Harness~\cite{eval-harness} and assigned binary correctness labels ($1$ for correct, $0$ otherwise) based on task-specific metrics (e.g., Exact Match for GSM8K). The collection of models labeled $1$ constitutes the ground truth set, serving as the uniform standard for calibrating and evaluating RACER, independent of the specific supervision signals used during base router training. Importantly, RACER is agnostic to this specific definition and can seamlessly adapt to alternative criteria, such as self-consistency across multiple sampling paths.

\paragraph{Data partitioning.}
To train base routers, we construct a global training set $\mathcal{D}_{\text{train}}$ by sampling from each benchmark's original training split. 
Specifically, for GSM8K, we randomly sample $50\%$ of the original training data to form $\mathcal{D}_{\text{train}}$; for MMLU, CMMLU, and ARC-Challenge, we randomly sample $40\%$ of each benchmark's original data to form $\mathcal{D}_{\text{train}}$.
We then merge all sampled subsets to form $\mathcal{D}_{\text{train}}$ for base-router training.
After training, we apply our RACER paradigm as a post-processing step on the trained routers. 

The remaining data (which constitutes 50\% of the original data for GSM8K and 60\% for others) is then partitioned to evaluate the RACER method. Specifically, this remaining subset is divided into calibration, validation, and test sets.
For most datasets, we allocate 50\% for calibration, 10\% for validation, and 40\% for final testing. However, to account for the smaller sample size of the ARC-Challenge dataset, we adjust these ratios to 40\% for calibration and 20\% for validation, while maintaining 40\% for the test set.

\paragraph{Setting of base router.}
The specific configurations for the three baseline routers are as follows:
\begin{itemize}[itemsep=3pt, parsep=0pt, topsep=1pt]
    \item KNNR \cite{hu2024routerbench}: A k-nearest neighbors (KNN) classifier is trained as the router, using the classification probabilities of each LLM as the router score. We employ cosine similarity as the distance metric and set the number of neighbors to $k=40$.
    \item MLPR \cite{huang2025routereval}: A multi-layer perceptron (MLP) classifier is trained as the router to predict model performance score. The architecture consists of a hidden layer with $256$ units. Training is performed using BCEWithLogitsLoss with a batch size of $32$ and a learning rate of $10^{-4}$ for $100$ epochs.
    \item RouterDC \cite{chen2024routerdc}: RouterDC consists of an encoder and LLM embeddings, utilizing dual contrastive learning to train the router. The LLM embedding dimension is set to $768$. We set the hyper-parameters $\{K_+, K_-, H, \lambda\}$ to $\{3, 3, 3, 1\}$, respectively, and the number of clusters to $N=5$. The model is optimized using AdamW~\cite{loshchilov2018decoupled} for $1000$ steps with a learning rate of $5 \times 10^{-5}$, a weight decay of $0.01$, and a mini-batch size of $32$.
\end{itemize}
We utilize mDeBERTav3-base~\cite{he2021deberta} as the shared underlying encoder for all base routers across our benchmarks. All experiments are run on NVIDIA A100 and NVIDIA L40 GPUs. 

\paragraph{Non-conformity scores.}
To evaluate the adaptability of the RACER paradigm, we use two distinct non-conformity score functions derived from the augmented router scores \(r(\bm{x}, m)\):
\begin{enumerate}[itemsep=3pt]
	\item \textit{Router Score-Gap}. 
	$s_{\text{gap}}(\bm{x}, m) = r_{\max}(\bm{x}) - r(\bm{x}, m)$,
	where \(r_{\max}(\bm{x})=\max_{m \in \mathcal{M}'}r(\bm{x},m)\).
    This metric captures the confidence gap between $m$ and the top-ranked model. Smaller values indicate that $m$ is close to the maximum score.
	\item \textit{Inverse Probability}. 
	$s_{\text{prob}}(\bm{x}, m) = 1 - r(\bm{x}, m)$. Here, a lower score directly corresponds to higher confidence in selecting model $m$.
\end{enumerate}
These functions encode complementary signals: \textit{router score-gap} emphasizes relative separation, whereas \textit{inverse probability} leverages absolute confidence. Jointly, they allow us to empirically validate the robustness of RACER across varying non-conformity definitions.

To ensure theoretical validity and numerical distinctness, we incorporate a randomized smoothing step by defining the final score as $\tilde{s}(\bm{x}, m) = s(\bm{x}, m) + \epsilon$, where \(\epsilon \sim \mathcal{U}[0, 10^{-6}]\). 
Theoretically, this continuous noise satisfies the continuity assumption (i.e., \(\mathbb{P}(s(\bm{X}, m) = \lambda) = 0\)) required by Theorem \ref{thm:lower_bound} to guarantee the risk lower bound. 
Practically, since the magnitude of $\epsilon$ is negligible, this perturbation effectively resolves ties among models with identical router scores without altering the relative ranking of models with distinct scores.

\paragraph{Weighting schemes.}
In the weighted aggregation method, we use the following three metrics to determine the unnormalized weight $w_m$ for each model:
\begin{itemize}[itemsep=3pt, parsep=0pt, topsep=1pt]
    \item \textit{Base router score.} The weight is directly derived from the router's probability:  $w_m = r(\bm{x}_{n+1}, m)$. This reflects the router's intrinsic confidence in assigning the query to model $m$.
    \item \textit{Verbal binary confidence} \cite{lin2022teaching}. After generating the initial answer \( a_m \), we concatenate the query and answer with the prompt in Figure~\ref{fig:prompt_confidence} and run the model again. We instruct the model to output a single token (0 or 1). The weight \( w_m \) is set to this binary output (i.e., \( w_m \in \{0, 1\} \)).
    \item $\bm P(\text{True})$ \cite{kadavath2022language}. Similar to the verbal binary method, we use the prompt in Figure~\ref{fig:prompt_confidence} to assess correctness. However, instead of using the discrete output token, we extract the probability assigned to the token ``1" (representing high confidence) and use this soft probability as the weight \( w_m \).
\end{itemize}
For the \textit{Verbal binary confidence} and $\bm P(\text{True})$ metrics, relying on each model's self-evaluation could introduce inconsistencies due to varying calibration capabilities. To ensure that confidence scores are comparable across the router model set, we employ a unified evaluator strategy. Specifically, for each dataset, we select the single best-performing model as reported in \cite{chen2024routerdc} to compute the confidence scores for all answers generated by the candidate models.

\begin{figure}[h]
    \centering
    % 这里开始 tcolorbox
    \begin{tcolorbox}[
        colback=gray!5,
        colframe=black!75,
        title=\textbf{Prompt Template for Confidence Extraction}, % 盒子自带的顶部标题栏
        sharp corners=south,
        boxrule=0.8pt,
        left=5pt, right=5pt, top=5pt, bottom=5pt,
        fonttitle=\bfseries
    ]
        Question: \texttt{\{question\}} \\
        Proposed Answer: \texttt{\{answer\}} \\
        \vspace{0.2cm} %稍微加点间距
        Now I will rate my confidence in the proposed answer as either (0) or (1). \\
        Proposed confidence: (
    \end{tcolorbox}
    % tcolorbox 结束

    \vspace{-0.2cm} % 调整盒子和Caption的间距
    % Caption 放在盒子外面
    \caption{The prompt template used for extracting model confidence. The placeholders \texttt{\{question\}} and \texttt{\{answer\}} are replaced by the input query \( \bm{x}_{n+1} \) and the model's generated response \( a_m \), respectively.}
    \label{fig:prompt_confidence}
\end{figure}

\paragraph{Metrics.}
To evaluate the effectiveness of the RACER paradigm, we employ three key metrics: average \textit{Risk}, \textit{Size} and \textit{Accuracy}. These metrics are formally defined as:
\begin{equation*}
	\begin{aligned}
		& \text{Risk} := \frac{1}{N}\sum_{i=1}^{N} \mathbf{1}(C_{\hat \lambda}(\bm{x}_i)\cap G'_i = \emptyset), \\
        & \text{Size} := \frac{1}{N}\sum_{i=1}^{N} \left| C_{\hat \lambda}(\bm{x}_i) \cap \mathcal{M} \right|,\\
		& \text{Accuracy} := \frac{1}{N}\sum_{i=1}^{N}\mathbf{1}(\hat{y}_i =y^*_i ).
	\end{aligned}
\end{equation*}
where $N$ is the number of test queries, $G'_i$ denotes the augmented ground truth set, $\hat{y}_i$ and $y^*_i$ represent the final aggregated output of RACER and the correct answer, respectively. 
Crucially, the definition of \textit{size} uses the intersection with $\mathcal{M}$ to explicitly exclude the virtual null model $m_\emptyset$. This ensures the metric faithfully reflects the actual inference overhead by counting only the candidate LLMs.

\paragraph{Temperature scaling.} 
The temperature parameter \( T \) in the softmax function controls the entropy of the weight distribution. A higher \( T \) yields a more uniform distribution (approaching simple majority voting), while a lower \( T \) sharpens the distribution, giving significantly more influence to models with higher confidence scores. We optimize \( T \) to maximize performance on the validation set.

\paragraph{Selection of optimal configurations.}
To rigorously evaluate the performance of RACER-G and RACER-P on downstream tasks, we treat the target risk level $\alpha$ and the specific aggregation strategy as hyperparameters to be tuned.
We determine the optimal configuration by performing a grid search on the validation set, identifying the combination of $\alpha$ and aggregation method (e.g., Majority Voting or specific Weighted Aggregation metrics) that maximizes accuracy.
The optimal $\alpha$ and aggregation strategy identified on the validation set are then applied unchanged to the test set to derive the final reported results.
This protocol ensures that the performance improvements reported in our experiments are achieved through a rigorous validation process, avoiding overfitting to the test data.

\section{Additional experimental results}
\label{app:full_results}
In this section, we present the comprehensive experimental results that complement the findings in the main text. We specifically focus on the trial-wise distributions of risk and set size at the standard target level ($\alpha=0.1$), followed by a detailed analysis of the downstream accuracy and hyperparameter selection.

\subsection{Detailed results on risk control and model set size}
\label{app:detail_risk_size}
Figure~\ref{fig:racer_risk_size} presents the complete histograms for empirical risk and prediction set size across all four benchmarks (GSM8K, MMLU, CMMLU, ARC-Challenge) and three base routers (RouterDC, MLPR, KNNR) over 100 independent trials.

\begin{figure}[!tbhp]
	\centering
	\begin{subfigure}{\linewidth}
		\centering
		\includegraphics[width=\linewidth]{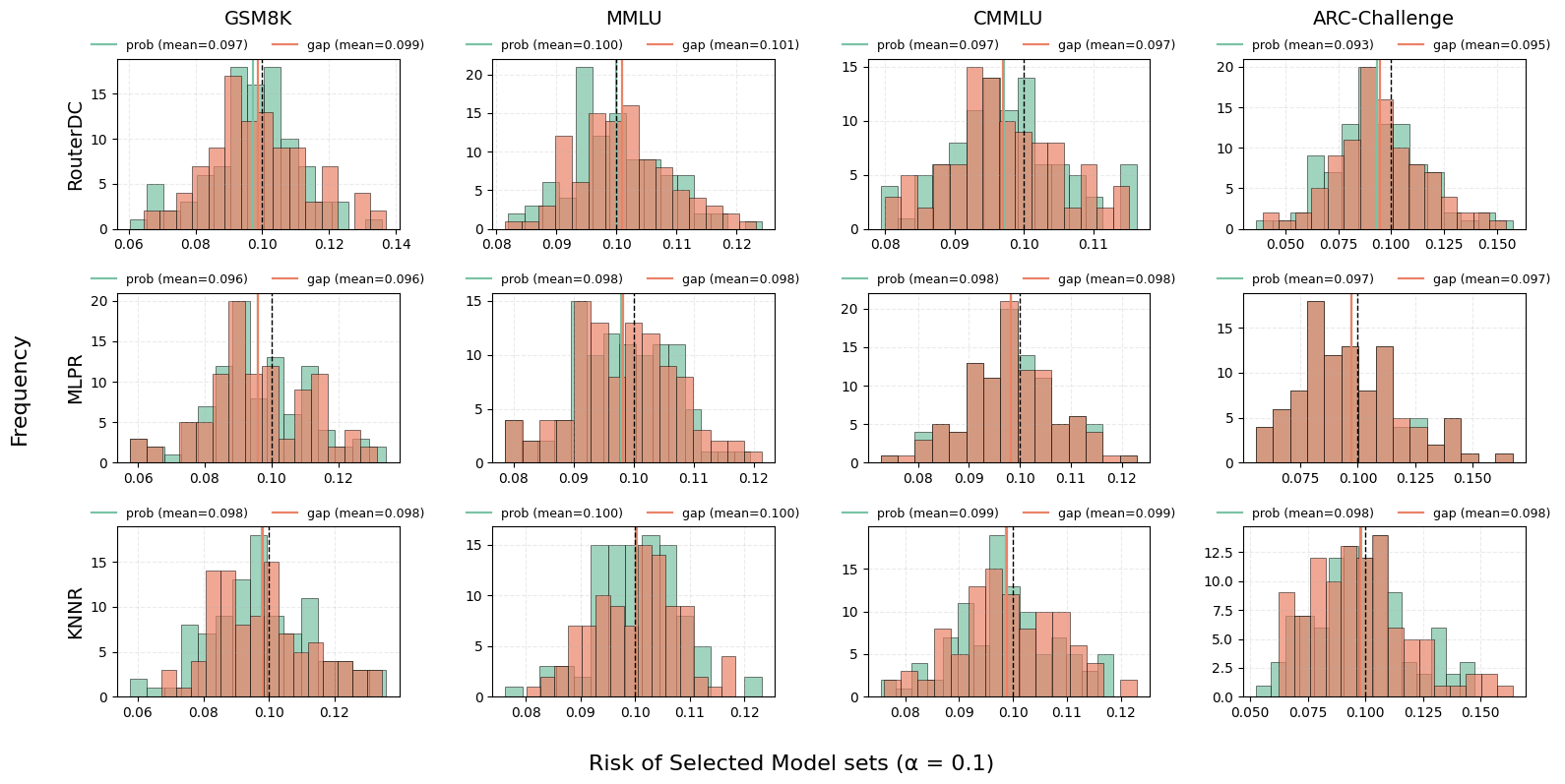}
		% \caption{\textit{Risk} control of RACER.}
	\end{subfigure}
	\vspace{0.5em} % 控制上下两图的间距，可微调
	\begin{subfigure}{\linewidth}
		\centering
		\includegraphics[width=\linewidth]{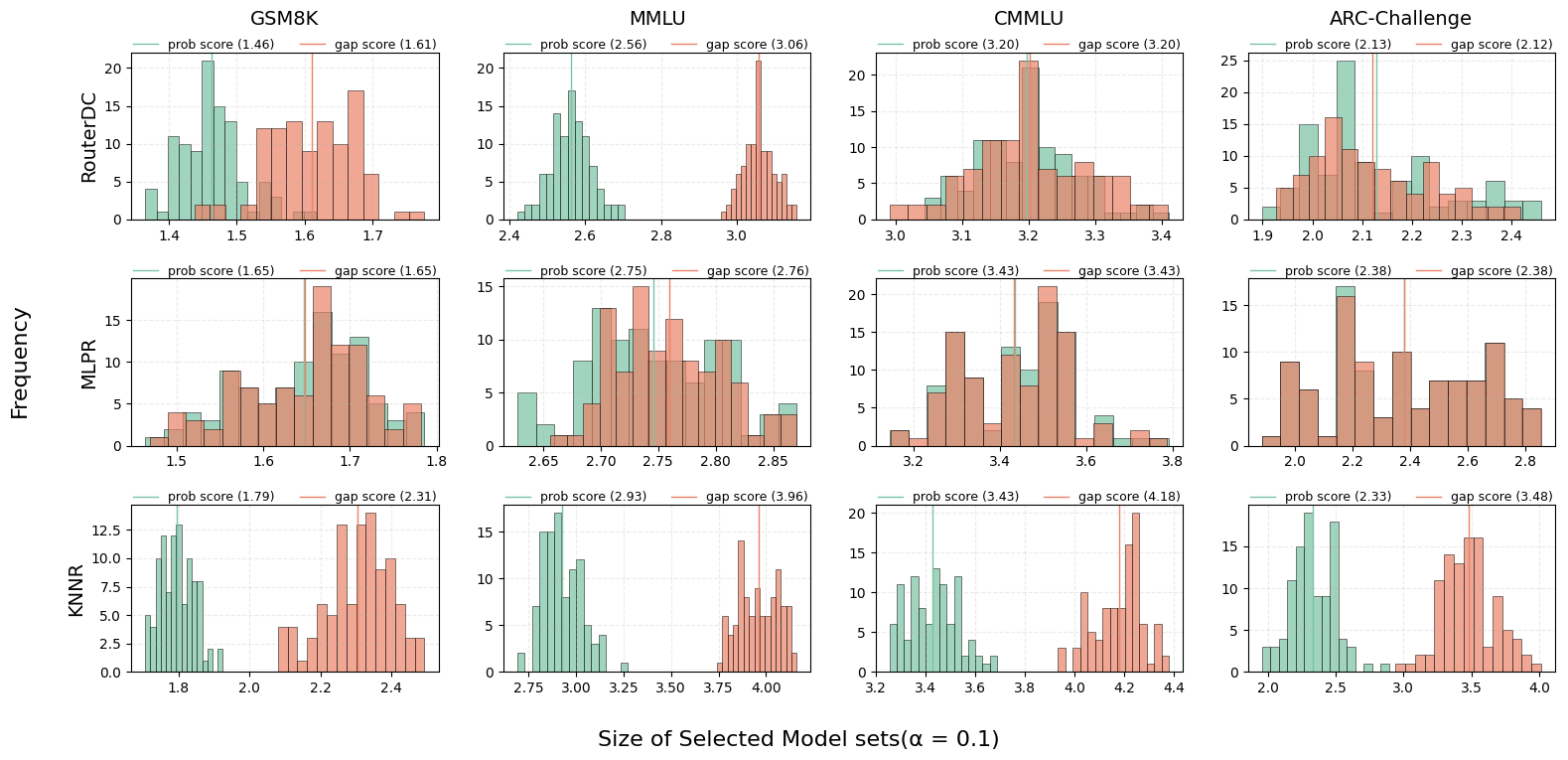}
		% \caption{\textit{Set size} of RACER.}
	\end{subfigure}
	\caption{\textbf{Distributions of empirical \textit{Risk} and \textit{Size} over 100 independent trials with a target risk level $\alpha=0.1$.} The top row displays the risk distribution, where the vertical black dashed line indicates the target risk level $\alpha$. The bottom row illustrates the distribution of prediction set sizes. The \textcolor[rgb]{0.4,0.7,0.6}{green} bars represent the \textit{router score-gap} non-conformity score, while the \textcolor[rgb]{0.9,0.6,0.5}{orange} bars represent the \textit{inverse probability} non-conformity score. The results empirically demonstrate that RACER strictly controls the risk around the target level across different base routers and benchmarks.}
 \label{fig:racer_risk_size}
\end{figure}

\paragraph{RACER achieves precise risk control across diverse benchmarks.}
Consistent with the representative results on CMMLU shown in the main text, the top row of Figure~\ref{fig:racer_risk_size} confirms that RACER maintains rigorous risk control across all evaluated datasets. The empirical risk distributions are consistently centered around the target vertical line ($\alpha=0.1$), with narrow fluctuations. This validates that the calibration guarantee of RACER is distribution-free and holds regardless of the underlying base router's architecture or the task's difficulty.

\paragraph{Impact of base router and non-conformity score on set size.}
The bottom row of Figure~\ref{fig:racer_risk_size} illustrates the distribution of prediction set sizes. We clearly observe that stronger base routers (e.g., RouterDC) naturally lead to more compact sets compared to weaker ones (e.g., KNNR), as the correct answer is ranked higher. Furthermore, comparing the two scoring methods, the \textit{inverse probability} score (orange bars) frequently yields smaller sets than the \textit{router score-gap} (green bars), particularly on weaker base routers. This demonstrates that probability-based uncertainty quantification offers better statistical efficiency at the same risk level.

\subsection{Detailed results on downstream accuracy}
\label{app:detail_acc}

In this section, we provide a deeper analysis of the accuracy improvements reported in Section~\ref{subsec:result}. Specifically, we examine the optimal hyperparameters (i.e., the risk level $\alpha$ and the aggregation strategy) that were selected to maximize performance on validation set.
To maintain brevity, we define the abbreviations of aggregation methods used in the subsequent analysis as follows:
    \begin{itemize}[itemsep=3pt, parsep=0pt, topsep=1pt]
        \item \textbf{Majority} (Majority Voting): Unweighted voting where the answer predicted by the largest number of selected candidate models is chosen.
        \item \textbf{W-Router} (Weighted by Router Score): Weighted voting using the router's predicted score as the confidence weight.
        \item \textbf{W-Binary} (Weighted by Verbal Binary Confidence): Weighted voting using the model's self-reported verbal confidence.
        \item \textbf{W-$\bm P(\text{True})$} (Weighted by $P(\text{True})$): Weighted voting using the probability of the token ``True''.
    \end{itemize}

\begin{figure}[t]
    \centering
    \includegraphics[width=0.8\linewidth]{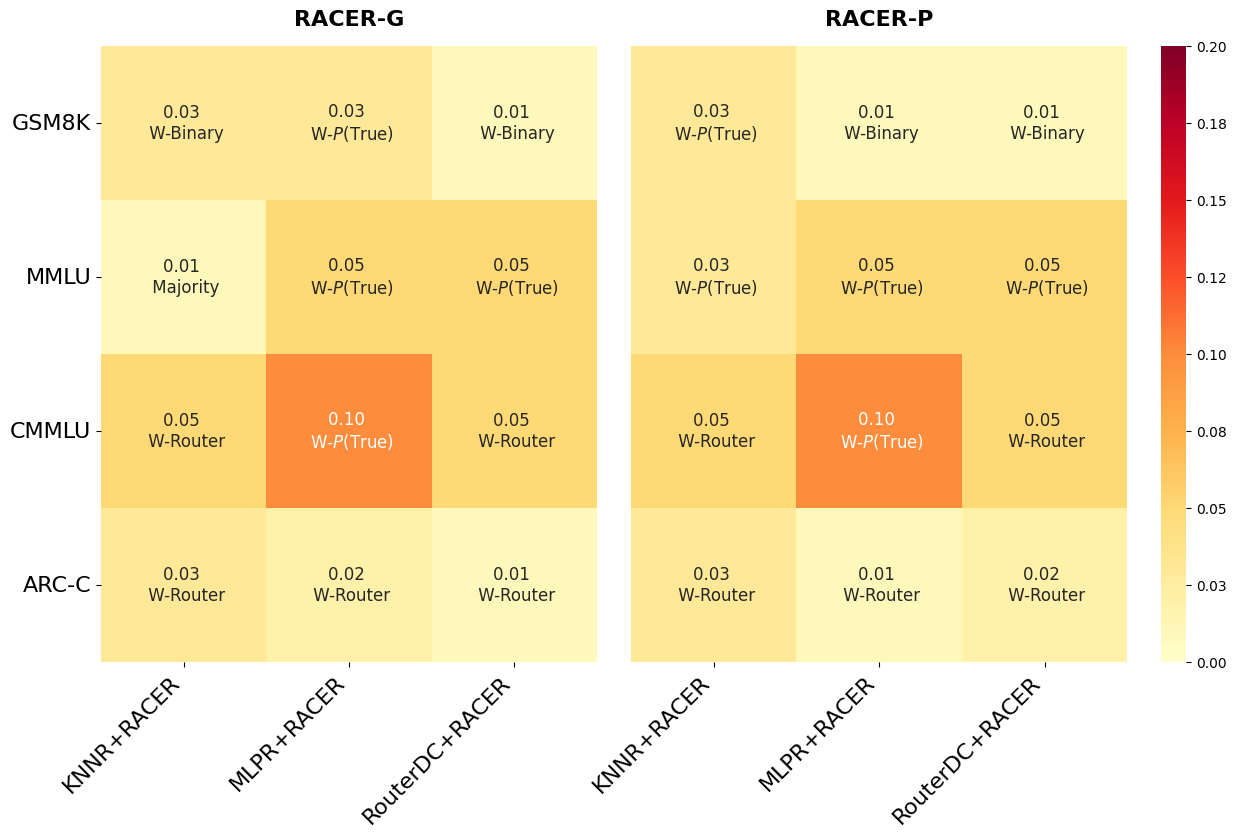}
    \caption{\textbf{Heatmap of optimal hyperparameters selected on the validation set.} The figure displays the grid-search results for the target risk level $\alpha$ (indicated by the cell value and color intensity) and the aggregation method (text annotation) that achieved the highest accuracy on the validation set. The left panel shows configurations for RACER-G (router score-gap), and the right panel for RACER-P (inverse probability). These optimal configurations were applied to the test set to produce the main results in Table~\ref{tab:main_accuracy}.}
    \label{fig:alpha_heatmap}
\end{figure}

\begin{figure}
    \centering
    \includegraphics[width=1\linewidth]{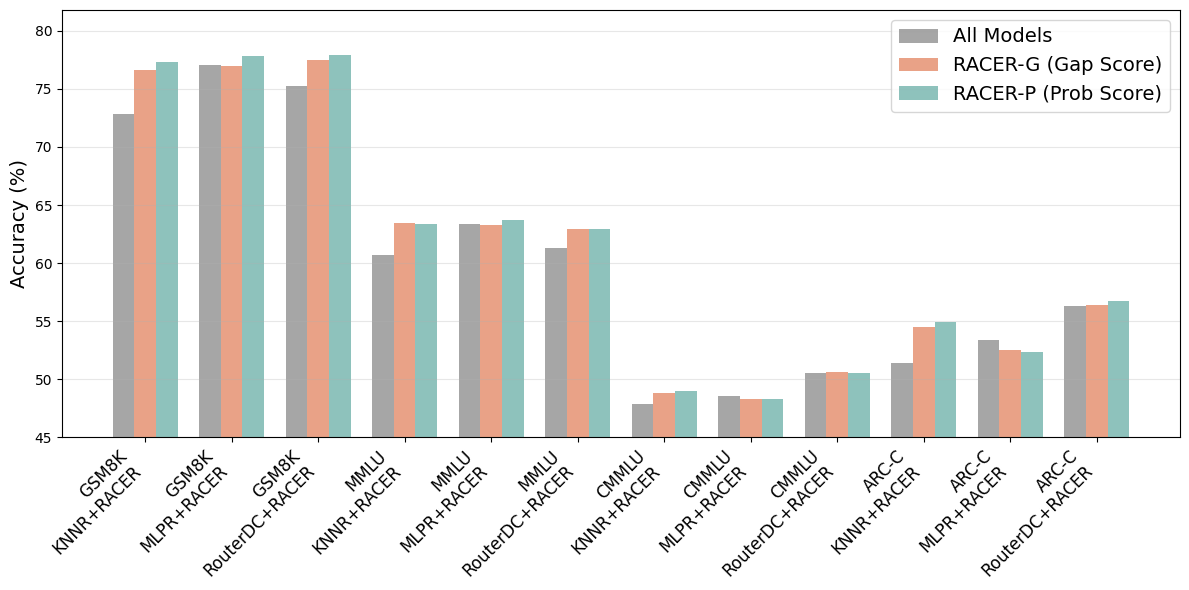}
    \caption{\textbf{Performance evaluation of RACER versus full model aggregation.} The figure presents the comparison of RACER-G/P (w/ Agg.) against the full model aggregation baseline across varying dataset-router pairs. RACER-P demonstrates superior performance in 10/12 cases, and RACER-G in 8/12 cases, thereby confirming that effectively filtering noisy models leads to more robust predictions than simply aggregating all available models.}
    \label{fig:vs_full_models}
\end{figure}

\paragraph{Optimal risk tolerance is data-dependent.}
Figure~\ref{fig:alpha_heatmap} illustrates that the optimal $\alpha$ is not static but dynamically adapts to the nature of the task. As shown in the heatmap, reasoning-intensive benchmarks (e.g., GSM8K, ARC-C) consistently favor strict risk control (typically $\alpha \in [0.01, 0.03]$) to enforce high-precision retrieval. In contrast, for broad knowledge tasks (e.g., CMMLU), looser constraints ($\alpha \approx 0.05 - 0.10$) are preferred to maximize downstream accuracy.

\paragraph{Weighted aggregation dominates majority voting.}
Figure~\ref{fig:alpha_heatmap} demonstrates that confidence-aware aggregation consistently outperforms simple majority voting. Furthermore, the heatmap reveals that continuous weighting schemes, specifically \textit{Router Score} and $P(\text{True})$, are selected as optimal significantly more often than the discrete \textit{Binary} confidence. This indicates that fine-grained confidence signals provide superior granularity for distinguishing model reliability compared to coarse binary (0/1) indicators, a finding that aligns with recent observations in \cite{taubenfeld2025confidence}.

% \begin{longtable}{llcccccccc}

\begin{table*}[t]
\centering
\caption{\textbf{\textit{Risk} across varying target risk levels ($\alpha$).} We report the empirical risk achieved by different RACER configurations (combining three base routers with two non-conformity scores) over 100 independent trials. The results span four benchmarks (GSM8K, MMLU, CMMLU, ARC-Challenge) and a range of user-specified risk level $\alpha \in [0.01, 0.50]$. The close alignment between the \textit{risk} (values in the table) and the target $\alpha$ (column headers) confirms the validity of the RACER calibration procedure across a wide range of risk levels.}
\label{tab:risk_vs_alpha_long}
\renewcommand\arraystretch{1.1}
\resizebox{1.00\textwidth}{!}{%
    \setlength{\tabcolsep}{5mm}{
    % \small
    % \footnotesize
    \begin{tabular}{llcccccccc}
\toprule
\multirow{2}{*}{Dataset} & \multirow{2}{*}{Method} &
\multicolumn{8}{c}{$\alpha$} \\
\cmidrule(lr){3-10}
& & 0.01 & 0.05 & 0.10 & 0.15 & 0.20 & 0.30 & 0.40 & 0.50 \\
\midrule

% ===================== gsm8k =====================
\multirow{6}{*}{\textbf{GSM8K}} & KNNR+RACER-G     & 0.010 & 0.048 & 0.098 & 0.147 & 0.197 & 0.297 & 0.394 & 0.495 \\
                               & KNNR+RACER-P     & 0.009 & 0.048 & 0.098 & 0.147 & 0.196 & 0.296 & 0.395 & 0.495 \\
                               & MLPR+RACER-G     & 0.010 & 0.048 & 0.096 & 0.145 & 0.195 & 0.297 & 0.396 & 0.497 \\
                               & MLPR+RACER-P     & 0.010 & 0.048 & 0.096 & 0.145 & 0.195 & 0.296 & 0.396 & 0.497 \\
                               & RouterDC+RACER-G & 0.009 & 0.048 & 0.099 & 0.146 & 0.195 & 0.296 & 0.394 & 0.493 \\
                               & RouterDC+RACER-P & 0.010 & 0.049 & 0.097 & 0.145 & 0.195 & 0.296 & 0.394 & 0.493 \\
\midrule

% ===================== mmlu =====================
\multirow{6}{*}{\textbf{MMLU}}  & KNNR+RACER-G     & 0.010 & 0.050 & 0.100 & 0.151 & 0.200 & 0.301 & 0.400 & 0.500 \\
                               & KNNR+RACER-P     & 0.010 & 0.050 & 0.100 & 0.150 & 0.200 & 0.300 & 0.400 & 0.500 \\
                               & MLPR+RACER-G     & 0.010 & 0.048 & 0.098 & 0.149 & 0.199 & 0.298 & 0.397 & 0.498 \\
                               & MLPR+RACER-P     & 0.010 & 0.049 & 0.098 & 0.148 & 0.199 & 0.298 & 0.397 & 0.498 \\
                               & RouterDC+RACER-G & 0.010 & 0.049 & 0.101 & 0.151 & 0.201 & 0.300 & 0.399 & 0.499 \\
                               & RouterDC+RACER-P & 0.010 & 0.050 & 0.100 & 0.150 & 0.200 & 0.300 & 0.401 & 0.500 \\
\midrule

% ===================== cmmlu =====================
\multirow{6}{*}{\textbf{CMMLU}} & KNNR+RACER-G     & 0.010 & 0.050 & 0.099 & 0.149 & 0.199 & 0.298 & 0.398 & 0.499 \\
                               & KNNR+RACER-P     & 0.010 & 0.050 & 0.099 & 0.147 & 0.197 & 0.298 & 0.398 & 0.499 \\
                               & MLPR+RACER-G     & 0.010 & 0.048 & 0.098 & 0.151 & 0.201 & 0.300 & 0.399 & 0.500 \\
                               & MLPR+RACER-P     & 0.010 & 0.048 & 0.098 & 0.151 & 0.201 & 0.300 & 0.399 & 0.500 \\
                               & RouterDC+RACER-G & 0.010 & 0.048 & 0.097 & 0.147 & 0.198 & 0.297 & 0.399 & 0.500 \\
                               & RouterDC+RACER-P & 0.010 & 0.048 & 0.097 & 0.146 & 0.197 & 0.297 & 0.400 & 0.499 \\
\midrule

% ===================== arc_challenge =====================
\multirow{6}{*}{\textbf{ARC-C}} & KNNR+RACER-G     & 0.008 & 0.048 & 0.098 & 0.146 & 0.194 & 0.294 & 0.391 & 0.493 \\
                                        & KNNR+RACER-P     & 0.008 & 0.048 & 0.098 & 0.145 & 0.194 & 0.294    & 0.393 & 0.494 \\
                                        & MLPR+RACER-G     & 0.008 & 0.048 & 0.097 & 0.146 & 0.197 & 0.296 & 0.395 & 0.497 \\
                                        & MLPR+RACER-P     & 0.008 & 0.048 & 0.097 & 0.147 & 0.197 & 0.296 & 0.395 & 0.497 \\
                                        & RouterDC+RACER-G & 0.008 & 0.045 & 0.095 & 0.142 & 0.192 & 0.294 & 0.396 & 0.495 \\
                                        & RouterDC+RACER-P & 0.007 & 0.046 & 0.093 & 0.142 & 0.192 & 0.296 & 0.396 & 0.496 \\
\bottomrule
\end{tabular}
}
}
\end{table*}
% \end{longtable}

\paragraph{Consistency across non-conformity measures.}
Comparing the left (RACER-G) and right (RACER-P) panels in Figure~\ref{fig:alpha_heatmap}, we observe a striking alignment in the distribution of optimal hyperparameters. For most dataset-router pairs, the selected $\alpha$ levels and aggregation strategies remain largely invariant to the choice of the non-conformity score. This consistency suggests that the optimal configuration is primarily driven by the intrinsic properties of the task and the base router rather than the specific formulation of the uncertainty metric, further highlighting the robustness of the RACER framework.

\subsection{Detailed results on efficiency and accuracy gains over full model aggregation}
\label{app:vs_full_model}
% In this section, we evaluate the benefits of RACER's routing paradigm compared to the \textit{full model aggregation} baseline. To ensure a rigorous comparison, the aggregation strategy for the baseline was also optimized on the validation set to achieve its best possible performance. Consequently, any performance gains observed in RACER are derived from its effective filtering mechanism rather than a sub-optimal baseline.
This section serves as a supplement to the efficiency analysis in Figure~\ref{fig:model_calls_save} of the main text. We evaluate the accuracy of RACER compared to the \textit{full model aggregation} baseline. To ensure a rigorous comparison, the aggregation strategy for the baseline was also optimized on the validation set.

\paragraph{Aggregation over selected model sets mitigates noise.}
Contrary to the intuition that ``more models yield better results'', Figure~\ref{fig:vs_full_models} reveals that aggregating all available models, even when the aggregation parameters are optimized, often introduces noise from weaker predictors. By performing aggregation over the selected model sets, RACER constructs a cleaner candidate pool. Specifically, RACER-P outperforms the optimized full model baseline in \textbf{10 out of 12} experimental configurations, and RACER-G surpasses it in \textbf{8 out of 12} cases. This confirms that excluding irrelevant models via risk-aware routing is more effective than simply assigning them low weights in a full aggregation scheme.

\section{Extensive study}
\label{app:extensive_study}
In this section, we extend our evaluation to investigate the behavior of RACER under varying $\alpha$. We analyze how the risk control stability and prediction set efficiency evolve as the user-specified risk tolerance $\alpha$ ranges from $0.01$ to $0.50$.

\subsection{Performance of RACER under varying risk levels}
\label{app:ex_multiple_alpha}
% Table~\ref{tab:risk_vs_alpha_long} reports the precise mean empirical risk across a wide range of $\alpha$, and Figure~\ref{fig:size_vs_alpha} visualizes the corresponding evolution of prediction set sizes.

\paragraph{Robust risk control across a wide range of risk levels.}
As shown in Table~\ref{tab:risk_vs_alpha_long}, the mean empirical risk aligns precisely with the user-specified risk level $\alpha$ across all ranges. For instance, at $\alpha=0.10$, the empirical risk across all datasets and methods fluctuates narrowly between $0.093$ and $0.101$. This tight alignment persists from strict safety requirements ($\alpha=0.01$) to looser constraints ($\alpha=0.50$), demonstrating that RACER's calibration validity is robust to the choice of $\alpha$, the underlying dataset difficulty, and the base router architecture.

\begin{figure}[t]
    \centering
    \includegraphics[width=\linewidth]{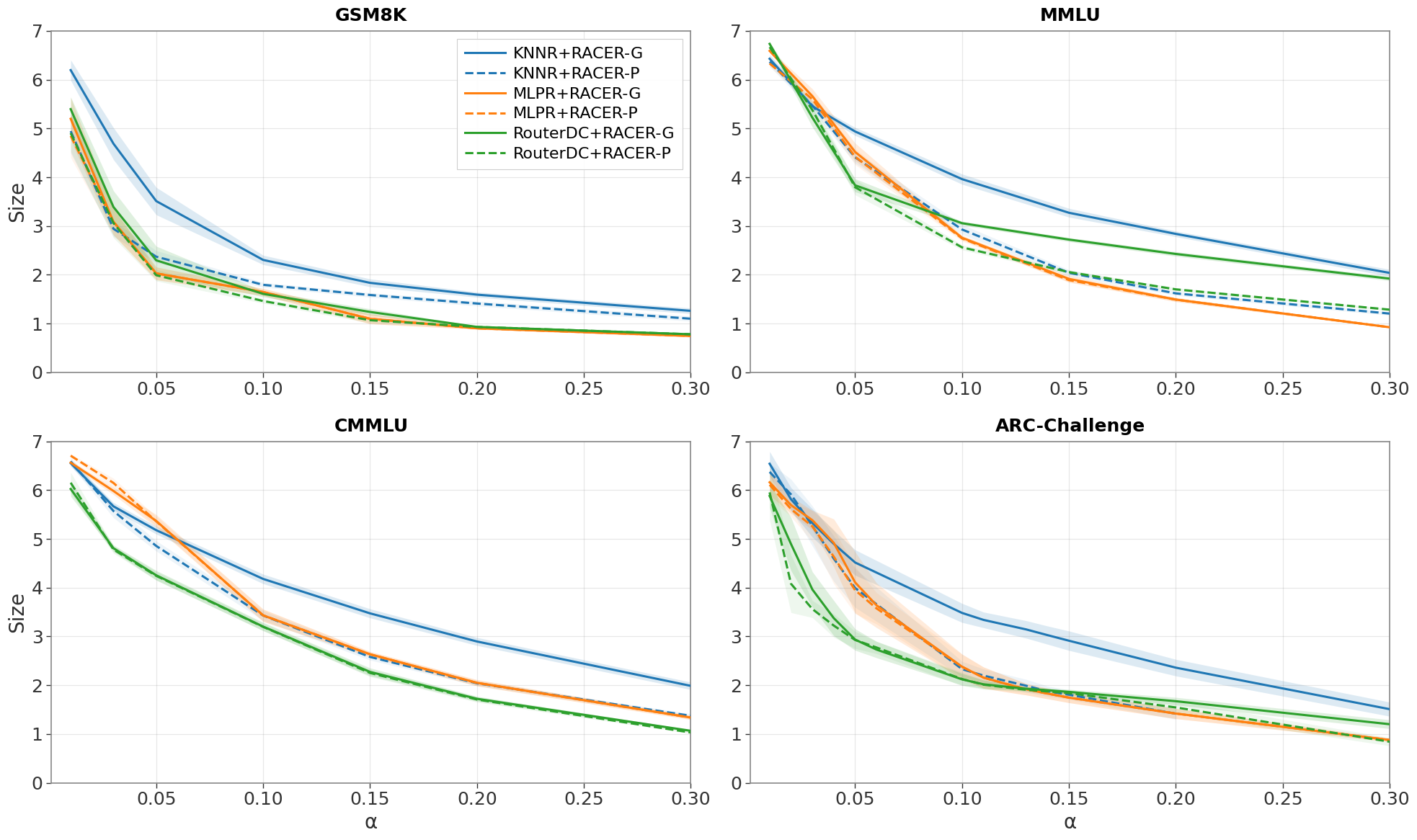}
    \caption{\textbf{Evolution of average selected model set size with respect to the target risk level $\alpha$.} The plots display the mean set size computed over 100 independent trials across four diverse benchmarks (GSM8K, MMLU, CMMLU, and ARC-Challenge) as the user-specified risk level varies within the range $\alpha \in [0.01, 0.3]$. Different colors distinguish the base routers: \textcolor[rgb]{0.12,0.47,0.71}{Blue} for KNNR, \textcolor[rgb]{1.0,0.5,0.05}{Orange} for MLPR, and \textcolor[rgb]{0.17,0.63,0.17}{Green} for RouterDC. Line styles differentiate the non-conformity scores: solid lines correspond to RACER-G (router score-gap) and dashed lines to RACER-P (inverse probability). Shaded regions indicate the standard deviation. The plots illustrate a monotonic trade-off where higher risk allows for more compact prediction sets, with RouterDC consistently demonstrating superior efficiency (smallest sizes) across a wide range of risk levels.}    \label{fig:size_vs_alpha}
\end{figure}

\paragraph{Prediction set size dynamically reflects task difficulty and risk trade-offs.}
As illustrated in Figure~\ref{fig:size_vs_alpha}, the average selected model set size decreases monotonically as the risk tolerance $\alpha$ increases. This trend is consistent with the nested nature of the decision sets: a higher user-specified risk level allows the router to be more selective, naturally resulting in smaller model sets. Crucially, RACER exhibits desirable adaptivity across different domains. For challenging benchmarks like CMMLU and ARC-Challenge, the system constructs larger prediction sets to maintain strict risk control. In contrast, for tasks with clearer reasoning signals like GSM8K, the set sizes drop more rapidly. This confirms that RACER effectively optimizes inference cost by tailoring the set size to the inherent uncertainty of the query distribution.

\paragraph{Superior base routers and inverse probability score improve statistical efficiency.}
While all configurations satisfy the risk control constraint, their efficiency varies.
First, regarding the base routers, Figure~\ref{fig:size_vs_alpha} shows that RouterDC (green lines) consistently achieves the smallest size compared to MLPR and KNNR at any given $\alpha$. This indicates that a better-performing base ranker allows RACER to construct more compact sets, thereby reducing inference overhead.
Second, regarding the non-conformity scores, the \textit{inverse probability} score (dashed lines) demonstrates superior robustness. As observed in the KNNR plots in Figure~\ref{fig:size_vs_alpha}, the inverse probability variant frequently yields smaller sets than the router score-gap variant (solid lines). This efficiency gap is particularly pronounced on weaker base routers, suggesting that the inverse probability score is more effective at extracting useful uncertainty signals from suboptimal models.

\end{document}